%% file: neurips_2026_en.tex
\documentclass{article}

\usepackage[preprint]{neurips_2026}

\usepackage{hyperref}       
\usepackage{url}            
\usepackage{booktabs}       
\usepackage{amsmath}
\usepackage{amssymb}
\usepackage{mathtools}
\usepackage{amsthm}
\usepackage{amsfonts}       
\usepackage{nicefrac}       
\usepackage{microtype}      
\usepackage{xcolor}         
\usepackage{subcaption}
\usepackage{booktabs}
\usepackage{multirow}
\usepackage{graphicx}
\usepackage{tikz}
\usetikzlibrary{arrows.meta,positioning,fit,shapes.geometric,calc}
\usepackage{algorithm}

\theoremstyle{plain}
\newtheorem{theorem}{Theorem}[section]
\newtheorem{proposition}[theorem]{Proposition}

\newtheorem{corollary}[theorem]{Corollary}
\theoremstyle{definition}
\newtheorem{definition}[theorem]{Definition}

\theoremstyle{remark}

\title{Design Conditions for Groupwise Learning with Sequence-Level Rewards: Token-Gradient Cancellation}

\author{%
  Fei Ding\thanks{Correspondence to: \texttt{dignfei@gmail.com}.} \\
  Alibaba Group \\
  \And
  Yongkang Zhang \\
  Alibaba Group \\
  \And
  Youwei Wang \\
  Tsinghua University \\
  \And
  Zijian Zeng \\
  Tsinghua University \\
}

\begin{document}

\maketitle

\begin{abstract}
Under sparse terminal rewards, groupwise comparison has become a mainstream paradigm for reinforcement-learning fine-tuning of reasoning models. However, long-horizon training often accumulates ineffective updates, which we call a learning tax, and further induces probability drift among equivalent solutions and entropy collapse. This paper identifies a necessary design condition from the perspective of token-level credit assignment: to prevent reward-irrelevant drift, groupwise objectives must preserve gradient exchangeability at the token-update level, so that gradients on weak-credit and high-frequency tokens cancel within the group. We show that two common mechanisms that break exchangeability make non-cancellation a structural norm rather than an accident. Motivated by this analysis, we introduce minimal within-group transformations that restore, or closely approximate, the cancellation structure in the shared-token subspace. Experiments show that these transformations stabilize training, improve sample efficiency, and enhance final performance, validating the practical value of the proposed design condition.
\end{abstract}

\begin{figure*}[http]
    \centering
    \includegraphics[width=\textwidth]{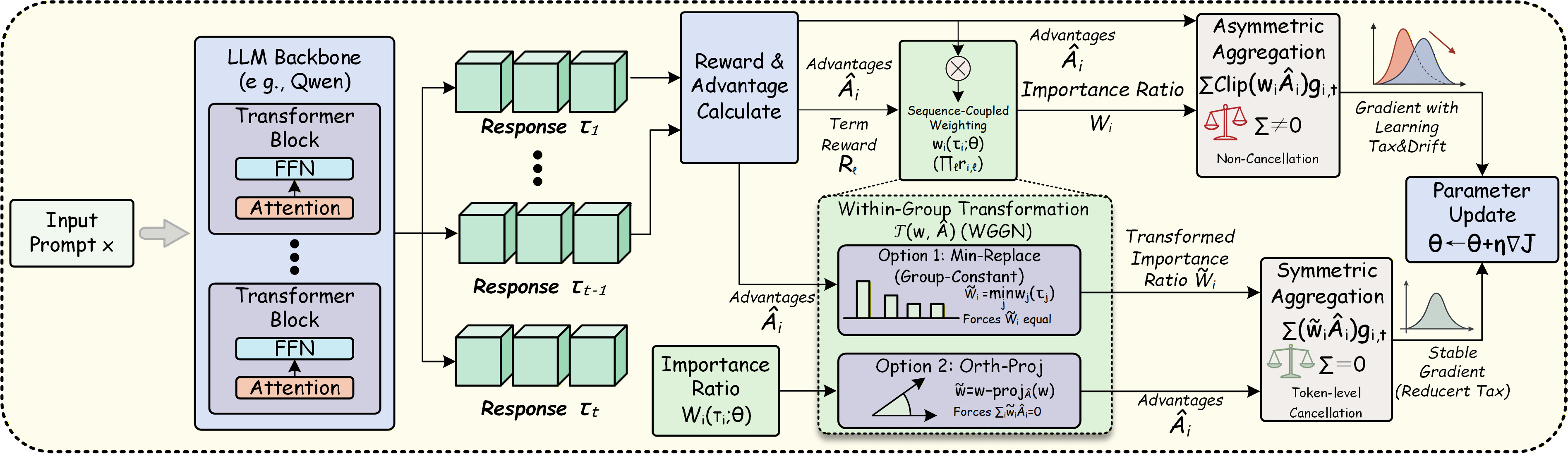}
    \caption{Overview of the DFPO pipeline. After sampling multiple responses for the same input, the method computes group-relative advantages and sequence-coupled importance weights, forming the effective token modulation $W$. It then applies a within-group transformation, Min-Replace or Orth-Proj, to remove within-group asymmetry in $W$, so that shared-token gradients cancel under symmetric aggregation ($\sum=0$). This reduces the learning tax and stabilizes parameter updates. The upper-right comparison illustrates how asymmetric aggregation without the transformation leads to gradient non-cancellation ($\sum\neq 0$) and probability drift.}
    \label{fig:example}
\end{figure*}

\section{Introduction}
\label{sec:introduction}
Under sparse terminal feedback, large language models (LLMs) achieve substantial performance gains on complex reasoning tasks through reinforcement learning. Learning objectives based on within-group comparison have gradually become a dominant paradigm. Their core idea is to compare multiple candidate trajectories sampled for the same input and to learn from their relative relationships within the group. Although these objectives can improve performance markedly in early training, they are difficult to keep stable over long training horizons, where ineffective updates, probability drift among equivalent solutions, and entropy collapse often emerge.

Existing work typically attributes this instability to reward sparsity or optimization noise. These explanations, however, do not answer a more fundamental question: why do different groupwise learning objectives, despite their varied implementation details, repeatedly exhibit similar failure modes? We argue that this phenomenon may arise from a structural limitation rather than from any particular algorithm or hyperparameter choice.

This paper develops a unified view from token-level credit assignment. If a groupwise objective breaks the exchangeability of token updates at the gradient level, especially through sequence-coupled trajectory aggregation or asymmetric segment clipping and selection, then systematic drift, namely a learning tax, together with probability drift and entropy collapse becomes unavoidable. We further show that once this condition is violated, the accumulation of the learning tax and the resulting probability drift are typically predictable.

\noindent\textbf{Contributions.} This paper makes the following contributions:
\begin{itemize}
	\item \textbf{A structural boundary for groupwise learning:} We propose a necessary condition: under sparse terminal feedback, maintaining token-level gradient exchangeability is crucial for stable groupwise learning.
	\item \textbf{A unified gradient perspective:} Through gradient analysis, we clearly distinguish the behavior of exchangeable and non-exchangeable objectives.
	\item \textbf{Constructive validation through structural repair:} We introduce minimal within-group transformations that restore, or closely approximate, gradient cancellation without changing the core framework of groupwise comparison.
\end{itemize}

\section{Related work}
\paragraph{Reinforcement-learning objectives based on within-group comparison.}
Representative methods include GRPO~\cite{shao2024deepseekmathpushinglimitsmathematical}, GSPO~\cite{zheng2025groupsequencepolicyoptimization}, and their variants, such as DAPO~\cite{yu2026dapo}, DCPO~\cite{yang2025dcpodynamicclippingpolicy}, and SSPO~\cite{yang2026ssposubsentencelevelpolicyoptimization}. These methods construct learning signals by comparing multiple trajectories for the same input and have shown advantages on tasks such as mathematical reasoning. Unlike existing work, this paper reveals a structural boundary of groupwise learning objectives from the perspective of token-level credit assignment, providing a unified explanation for failure modes across different groupwise learning methods.

\section{A necessary condition for stable groupwise comparison learning}
\label{sec:proposition}
In groupwise comparison learning, when the objective function breaks the exchangeability of token-level updates in the gradient domain, the intrinsic compensation structure within the group fails, leading to accumulated learning tax and probability drift or entropy collapse. Concretely, we propose the following testable premise: there exists a class of token types that are weakly correlated with the reward, such as generic template tokens, and the learning objective follows a within-group comparison paradigm. If these weakly correlated tokens cannot realize shared-token gradient cancellation within the group, they continually induce nonzero drift, eventually causing a systematic learning tax and entropy collapse.

\subsection{General form: a unified gradient representation of groupwise comparison objectives}
Consider multiple trajectories sampled from a reference policy for the same input. To cover both token-decomposed objectives and sequence-coupled objectives, we write a unified form at the gradient level. Let
\[
	\mathbf g_{i,t}(\theta)
	\triangleq
	\nabla_\theta \log \pi_\theta(y_{i,t}\mid x,y_{i,<t}),
	\qquad
	W_{i,t}(\theta)
	\triangleq
	\alpha_{i,t}\,\omega_{i,t}(\tau_i;\theta).
\]
Here $W_{i,t}$ is the effective token modulation that actually multiplies the gradient of the $t$-th token in the $i$-th trajectory, and $\widehat{A}_i$ denotes the within-group comparison signal. The unified gradient template is
\begin{equation}
	\nabla_\theta\mathcal{J}(\theta) =
	\mathbb{E}_{x}\left[
	\frac{1}{G}\sum_{i=1}^{G} \sum_{t=1}^{T_i}
	\widehat{A}_i\,W_{i,t}(\theta)\,\mathbf g_{i,t}(\theta)
	\right].
	\label{eq:general-objective-new}
\end{equation}
For token-decomposed objectives, one typically has $W_{i,t}=r_{i,t}$. For sequence-coupled objectives with length averaging, $W_{i,t}=s_i(\theta)/T_i$. Therefore, substituting $\omega_{i,t}=s_i(\theta)$ into the unified form does not introduce an additional trajectory-length factor $T_i$.

\subsection{Core definitions: events, cancellation, exchangeability, and learning tax}
\label{sec:core-definitions}

\begin{definition}[Shared context-token event]
\label{def:shared-event}
Fix an input $x$ and a time step $t^\star$. The event $\mathcal{E}_{t^\star}$ means that all trajectories in the group share the same context-token pair $(h^\star, y^\star)$ at this time step; that is, for every $i$, $h_{t^\star}^{(i)}=h^\star$ and $a_{t^\star}^{(i)}=y^\star$.
\end{definition}

\begin{definition}[Within-group cancellation]
\label{def:cancellation}
Under the event $\mathcal{E}_{t^\star}$, we say that this time step achieves \emph{within-group cancellation} if the effective group-aggregated gradient of the shared token is zero:
\begin{equation}
	\frac{1}{G}\sum_{i=1}^{G}
	\widehat{A}_i\,W_{i,t^\star}(\theta)\,\mathbf g_{i,t^\star}(\theta)
	= \mathbf{0}.
	\label{eq:exchangeability_cancellation_direct}
\end{equation}
Here $\mathbf g_{i,t^\star}(\theta)=\nabla_\theta\log\pi_\theta(y^\star\mid h^\star)$. Because all trajectories in $\mathcal{E}_{t^\star}$ share the same $(h^\star,y^\star)$, the above condition is equivalent to
\[
	\frac{1}{G}\sum_{i=1}^{G}
	\widehat{A}_i\,W_{i,t^\star}(\theta)=0.
\]
Intuitively, if token $y^\star$ carries no credit information for distinguishing trajectory quality, its token-level update should be zero.
\end{definition}

\begin{definition}[Shared-token gradient exchangeability]
\label{def:exchangeability}
We say that the shared token under event $\mathcal{E}_{t^\star}$ satisfies \emph{within-group gradient exchangeability} if the non-advantage modulation in its within-group gradient term is exchangeable across the group: there exists a scalar $\bar W_{t^\star}(\theta)$ independent of the trajectory index $i$ such that $W_{i,t^\star}(\theta)=\bar W_{t^\star}(\theta)$ for all $i$. Under the zero-mean constraint $\sum_i\widehat{A}_i=0$, this condition directly implies within-group gradient cancellation for the shared token in \eqref{eq:exchangeability_cancellation_direct}.
\end{definition}

\begin{definition}[Learning tax]
\label{def:learning-tax}
The \emph{learning tax} refers to the following phenomenon. For a class of tokens $\mathcal{C}$ weakly correlated with the terminal reward, satisfying $\mathbb{E}[\widehat{A}\mid y\in\mathcal{C}]\approx 0$, if the within-group gradients of these tokens cannot cancel, then the group-aggregated gradient is strictly nonzero in expectation, as shown in Appendix~\ref{app:tax-lowerbound}. This causes reward-irrelevant parameter drift to accumulate throughout training without bringing a net gain in task performance.
\end{definition}

\subsection{Limitations of statistical cancellation}
In practice, exact token-by-token cancellation is rare when contexts are not identical. However, when the gradient modulation of shared or high-frequency tokens is exchangeable, updates to generic tokens tend to cancel within the group. In contrast, when this modulation is non-exchangeable, the group-aggregated gradient of generic tokens may produce a significant net update. Within-group gradient cancellation therefore remains necessary.

\begin{proposition}[Ineffective updates and distribution drift without within-group cancellation]
	\label{prop:violate_cancel_implies_drift}
	Fix an input $x$ and a time step $t^\star$.
	Consider the event $\mathcal{E}_{t^\star}$ in which all trajectories in the group share the same context-token pair $(h^\star, y^\star)$ at this time step.
	Define the group-aggregated gradient induced by time step $t^\star$ as
	\[
	g_{t^\star}
	\;\triangleq\;
	\frac{1}{G}\sum_{i=1}^{G}
	\widehat{A}_i\,W_{i,t^\star}(\theta)\,\mathbf g_{i,t^\star}(\theta)
	=
	\left(\frac{1}{G}\sum_{i=1}^{G}
	\widehat{A}_i\,W_{i,t^\star}(\theta)\right)\mathbf g^\star(\theta),
	\]
	where $\mathbf g^\star(\theta)=\nabla_\theta\log\pi_\theta(y^\star\mid h^\star)$.
	Let the update be $\theta^+=\theta+\eta g_{t^\star}$, where $\eta>0$ is the step size.
	Define the conditional Fisher information matrix at context $h^\star$ by
	\[
	F_{\theta}(h^\star)
	\;\triangleq\;
	\mathbb{E}_{y\sim \pi_\theta(\cdot\mid h^\star)}
	\Big[
	\nabla_\theta \log \pi_\theta(y\mid h^\star)\;
	\nabla_\theta \log \pi_\theta(y\mid h^\star)^\top
	\Big].
	\]

	If the event $\mathcal{E}_{t^\star}$ does \emph{not} satisfy within-group cancellation, meaning $g_{t^\star}\neq \mathbf{0}$, and if $F_\theta(h^\star)$ is non-degenerate in the direction $g_{t^\star}$:
	\[
	g_{t^\star}^\top F_\theta(h^\star)\, g_{t^\star} \;>\; 0,
	\]
	then this update induces strictly positive conditional distribution drift at context $h^\star$:
	\begin{equation}
		\begin{aligned}
			\mathrm{KL}\!\left(\pi_{\theta^+}(\cdot\mid h^\star)\,\big\|\,\pi_\theta(\cdot\mid h^\star)\right)
			&= \frac{1}{2}\eta^2\, g_{t^\star}^\top F_\theta(h^\star)\, g_{t^\star} \\
			&\hspace{-11.3em}+ O\!(\eta^3\|g_{t^\star}\|^3)
			\quad > 0,\qquad (\eta\ \text{sufficiently small}).
		\end{aligned}
		\label{eq:drift_positive_under_violation}
	\end{equation}

	Therefore, when a shared token carries no credit information for distinguishing trajectory quality, violating within-group cancellation leads to an \emph{ineffective update} of that token's distribution at context $h^\star$, namely reward-irrelevant drift.
\end{proposition}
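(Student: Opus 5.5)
The plan is a second-order Taylor expansion of the conditional KL divergence along the update direction, followed by a sign argument on the leading term. Set $\Delta=\eta g_{t^\star}$ and define, for fixed $h^\star$,
\[
\phi(\Delta)\triangleq \mathrm{KL}\!\left(\pi_{\theta+\Delta}(\cdot\mid h^\star)\,\big\|\,\pi_\theta(\cdot\mid h^\star)\right)=\sum_{y}\pi_{\theta+\Delta}(y\mid h^\star)\log\frac{\pi_{\theta+\Delta}(y\mid h^\star)}{\pi_\theta(y\mid h^\star)}.
\]
I will assume, as is implicit in the statement, that the vocabulary is finite and $\theta\mapsto\pi_\theta(\cdot\mid h^\star)$ is $C^3$ with all probabilities strictly positive near $\theta$. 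Then $\phi$ is $C^3$ in a neighbourhood of $0$.

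The first step computes $\phi(0)$ and $\nabla\phi(0)$. Clearly $\phi(0)=0$. Differentiating,
\[
\nabla\phi(\Delta)=\sum_y\nabla_\theta\pi_{\theta+\Delta}(y)\,\bigl[\log\pi_{\theta+\Delta}(y)-\log\pi_\theta(y)\bigr]+\sum_y\nabla_\theta\pi_{\theta+\Delta}(y).
\]
The second sum vanishes identically because $\sum_y\pi=1$. The first sum vanishes at $\Delta=0$. Hence $\nabla\phi(0)=0$.

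The second step computes the Hessian at $0$. Differentiating once more, the only term that survives at $\Delta=0$ is $\sum_y \nabla\pi(y)\,\nabla\log\pi(y)^\top$. Here I use that $\nabla^2\sum_y\pi=0$, and that every other term carries a factor $\log\pi_{\theta+\Delta}-\log\pi_\theta$, which is zero at $\Delta=0$. Writing $\nabla\pi=\pi\,\nabla\log\pi$ gives
\[
\sum_y \pi_\theta(y\mid h^\star)\,\nabla\log\pi_\theta\,\nabla\log\pi_\theta^\top=F_\theta(h^\star).
\]
Taylor's theorem with remainder then yields
\[
\phi(\Delta)=\tfrac12\Delta^\top F_\theta(h^\star)\Delta+R(\Delta),\qquad |R(\Delta)|\le C\|\Delta\|^3,
\]
where $C$ bounds the third derivatives on a compact neighbourhood. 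Substituting $\Delta=\eta g_{t^\star}$ produces exactly \eqref{eq:drift_positive_under_violation}.

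The third step is the sign argument. Put $q\triangleq g_{t^\star}^\top F_\theta(h^\star)g_{t^\star}>0$, which holds by hypothesis (and presupposes $g_{t^\star}\neq\mathbf 0$, i.e.\ failure of Definition~\ref{def:cancellation}). The expansion gives
\[
\phi\ge \tfrac12\eta^2q-C\eta^3\|g_{t^\star}\|^3,
\]
which is strictly positive whenever $\eta<q/(2C\|g_{t^\star}\|^3)$. The interpretive conclusion follows: if $y^\star$ carries no credit information, this strictly positive KL change at $h^\star$ is reward-irrelevant drift.

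The main obstacle is making the remainder rigorous rather than formal. One must check that the third derivatives of $\phi$ are uniformly bounded near $0$, which needs $\pi_\theta(y\mid h^\star)$ bounded away from zero; this is automatic for a softmax over a finite vocabulary with a smooth logit map. One must also state that $C$ is local, so the $O(\eta^3\|g\|^3)$ term is uniform only for $\eta\|g_{t^\star}\|$ small. I would settle this by taking the neighbourhood to be a closed ball of fixed radius and invoking continuity of the third derivatives on it.
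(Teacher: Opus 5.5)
Your proposal is correct and follows the paper's proof: both expand $\varphi(\delta\theta)=\mathrm{KL}(\pi_{\theta+\delta\theta}(\cdot\mid h^\star)\,\|\,\pi_\theta(\cdot\mid h^\star))$ to second order at $\delta\theta=\mathbf 0$, using $\varphi(\mathbf 0)=0$, a vanishing gradient and Hessian $F_\theta(h^\star)$, and then take $\eta$ small enough that the cubic remainder cannot flip the sign. The only difference is that you derive the Hessian-equals-Fisher identity explicitly and make the remainder bound and the step-size threshold on $\eta$ quantitative, stating the regularity assumptions they need, whereas the paper cites these facts as standard.
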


\begin{proof}[Proof sketch]
	Apply a second-order Taylor expansion to the KL divergence at $\delta\theta=\mathbf{0}$. Its Hessian is the conditional Fisher information matrix $F_\theta(h^\star)$. Substituting $\delta\theta=\eta g_{t^\star}$ gives \eqref{eq:drift_positive_under_violation}. The full proof is given in Appendix~\ref{app:proofs}.
\end{proof}

\begin{corollary}[Accumulated log-odds drift among equivalent solutions $\Rightarrow$ entropy-collapse tendency in the linear region]
	\label{cor:logodds_accum_entropy}
	Fix an input $x$ and consider two semantically equivalent outputs $y_a,y_b$ with the same reward.
	For a class of \emph{sequence-coupled} within-group objectives in a linear update segment, with stop-gradient applied to the effective sequence coefficient,
	\[
	\theta^+=\theta+\eta\,\kappa(y;\theta)\,\mathbf{g}(y;\theta),\qquad \kappa(y;\theta)\ge 0,
	\]
	where $\kappa(y;\theta)\ge 0$ denotes the nonnegative sequence-level effective coefficient associated with output $y$, for example the positive scaling component induced by $\widehat{A}_y\,s_y(\theta)$ under GSPO with stop-gradient, and $\mathbf{g}(y;\theta)\triangleq \nabla_\theta \log \pi_\theta(y\mid x)$ is the score function along output $y$.
	There exists a constant $c(x)>0$ such that
	\begin{equation}
		\Delta \log \frac{\pi_\theta(y_a\mid x)}{\pi_\theta(y_b\mid x)}
		=
		\eta\,c(x)\big(\kappa(y_a;\theta)-\kappa(y_b;\theta)\big)+O(\eta^2).
		\label{eq:cor_logodds_drift}
	\end{equation}
	If $\kappa(y_a;\theta_k)-\kappa(y_b;\theta_k)$ keeps the same sign over $K$ consecutive updates and is lower bounded as $|\kappa(y_a;\theta_k)-\kappa(y_b;\theta_k)|\ge \underline{\Delta}>0$, then the log-odds drift accumulates over $K$ steps, causing $\pi_{\theta_K}(y_a\mid x)/\pi_{\theta_K}(y_b\mid x)\to 0$ or $\infty$. The entropy over the equivalent subset $\{y_a,y_b\}$ thus tends to $0$, giving an entropy-collapse tendency over the set of equivalent solutions.
\end{corollary}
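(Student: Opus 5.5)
We prove the corollary by a first-order expansion of the log-odds along the update direction, and then accumulate over steps. Write $\ell(\theta)\triangleq\log\pi_\theta(y_a\mid x)-\log\pi_\theta(y_b\mid x)$, so that $\nabla_\theta\ell=\mathbf g(y_a;\theta)-\mathbf g(y_b;\theta)$. In the linear update segment we treat the step as the aggregate of the two contributions $\eta\kappa(y_a;\theta)\mathbf g(y_a;\theta)+\eta\kappa(y_b;\theta)\mathbf g(y_b;\theta)$; other group members contribute only through cross terms, which we either absorb into the constant or assume negligible. Because stop-gradient is applied to the coefficients, each $\kappa$ is a constant during the step. A Taylor expansion then gives
\[
\Delta\ell=\eta\big(\mathbf g_a-\mathbf g_b\big)^\top\big(\kappa_a\mathbf g_a+\kappa_b\mathbf g_b\big)+O(\eta^2),
\]
where $\mathbf g_a=\mathbf g(y_a;\theta)$, $\mathbf g_b=\mathbf g(y_b;\theta)$, $\kappa_a=\kappa(y_a;\theta)$, $\kappa_b=\kappa(y_b;\theta)$, and the remainder is controlled by a local bound on the Hessian of $\ell$ and on $\|\mathbf g\|$.

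The key step is to rewrite the linear term in the form $c(x)(\kappa_a-\kappa_b)$. Expanding gives $\kappa_a\|\mathbf g_a\|^2-\kappa_b\|\mathbf g_b\|^2+(\kappa_b-\kappa_a)\,\mathbf g_a^\top\mathbf g_b$. For semantically equivalent outputs in the linear region, we posit that the score functions have matched norms, $\|\mathbf g_a\|^2=\|\mathbf g_b\|^2\triangleq N(x)$; this holds, for example, when the outputs share a template and differ only in a few symmetric tokens, and to first order the mismatch can be placed into the $O(\eta^2)$ term or treated as a regime assumption. Under this assumption the linear term equals $(\kappa_a-\kappa_b)\big(N(x)-\mathbf g_a^\top\mathbf g_b\big)=\tfrac12(\kappa_a-\kappa_b)\|\mathbf g_a-\mathbf g_b\|^2$. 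We therefore set $c(x)=\tfrac12\|\mathbf g_a-\mathbf g_b\|^2$, which is strictly positive as long as the two scores are distinct, i.e.\ as long as the model distinguishes $y_a$ from $y_b$. This establishes \eqref{eq:cor_logodds_drift}. We expect justifying the equal-norm, or more generally symmetric-geometry, reduction to be the main obstacle. As a fallback, we would state $c(x)$ as a lower bound over the segment, obtained from a uniform bound on the norm asymmetry relative to $\underline{\Delta}$.

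For the accumulation, we sum \eqref{eq:cor_logodds_drift} over $k=0,\dots,K-1$. We assume $c(x_k)\ge\underline c>0$ uniformly along the linear segment and that the $O(\eta^2)$ remainders are bounded by $C\eta^2$. With a constant sign and $|\kappa_a-\kappa_b|\ge\underline{\Delta}$, this gives $|\ell(\theta_K)-\ell(\theta_0)|\ge K\eta(\underline c\,\underline\Delta-C\eta)$. The right-hand side grows linearly in $K$ once $\eta<\underline c\,\underline\Delta/C$, so the ratio $\pi_{\theta_K}(y_a\mid x)/\pi_{\theta_K}(y_b\mid x)=e^{\ell(\theta_K)}$ tends to $0$ or $\infty$. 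Finally, consider the distribution restricted to $\{y_a,y_b\}$, namely $p=\sigma(\ell)$ and $1-p$. Its binary entropy $H(p)=-p\log p-(1-p)\log(1-p)$ tends to $0$ as $|\ell|\to\infty$, which gives the entropy-collapse tendency on the equivalent subset.
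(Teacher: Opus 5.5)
Your argument is correct under the equal-norm assumption you flag. It shares the paper's skeleton: first-order expansion, subtraction, summation over $k$, and binary entropy of $p=\sigma(\ell)$. The key step, however, is handled differently.

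The paper expands $\log\pi_\theta(y\mid x)$ for each output separately. It simply asserts $\Delta\log\pi_\theta(y\mid x)=\eta\,c(x)\kappa(y;\theta)+O(\eta^2)$ with a common $c(x)$ that ``absorbs normalization constants''. Read literally, this drops the effect of $y_b$'s step on $\log\pi_\theta(y_a\mid x)$ (and vice versa), and in effect requires $\|\mathbf g_a\|^2=\|\mathbf g_b\|^2=c(x)$.

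You instead expand the log-odds under the joint step, keep the cross term $\mathbf g_a^\top\mathbf g_b$, and identify $c=\tfrac12\|\mathbf g_a-\mathbf g_b\|^2$. This constant is positive as soon as $\mathbf g_a\neq\mathbf g_b$ and needs no orthogonality of the two scores. You also make the accumulation quantitative, with the step-size condition $\eta<\underline c\,\underline\Delta/C$ and a uniform lower bound on $c$ along the iterates; the paper only says ``linear accumulation''. Since $c$ depends on $\theta_k$, write $c(\theta_k)$ rather than $c(x_k)$.

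What your route buys is that the hidden hypothesis becomes explicit, and it is genuinely needed. If $\kappa_a=\kappa_b=\kappa>0$, the log-odds still move by $\eta\kappa\big(\|\mathbf g_a\|^2-\|\mathbf g_b\|^2\big)$ at first order. The displayed relation cannot reproduce this for any $c(x)$.

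Two of your escape hatches should be dropped in favor of the explicit assumption. First, the norm mismatch is first order in $\eta$, so it cannot be moved into the $O(\eta^2)$ remainder. The exact identity
\[
(\mathbf g_a-\mathbf g_b)^\top(\kappa_a\mathbf g_a+\kappa_b\mathbf g_b)=\tfrac12\|\mathbf g_a-\mathbf g_b\|^2(\kappa_a-\kappa_b)+\tfrac12(\kappa_a+\kappa_b)\big(\|\mathbf g_a\|^2-\|\mathbf g_b\|^2\big)
\]
shows precisely what is discarded, and it turns your fallback into a clean statement. If the second term is uniformly at most $\tfrac12\underline c\,\underline\Delta$ in absolute value, each step still moves $\ell$ in the fixed direction by at least $\tfrac12\eta\,\underline c\,\underline\Delta-C\eta^2$.

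Second, the contributions $\eta\sum_{j\notin\{a,b\}}\kappa_j(\mathbf g_a-\mathbf g_b)^\top\mathbf g_j$ from the other group members are not proportional to $\kappa_a-\kappa_b$, so they cannot be absorbed into $c$. They have to be assumed negligible, which the paper also does tacitly.
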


\noindent The proof is provided in Appendix~\ref{app:proofs}.
The minimal algebraic example corresponding to the above claim is provided in Appendix~\ref{sec:toy-problem}.

Two immediate implications follow. The within-group cancellation of a shared token is determined by the effective token modulation $W_{i,t^\star}$. Even when the token ratios at the shared position are identical within the group, a sequence-coupled objective may still induce within-group differences among $W_{1,t^\star},\dots,W_{G,t^\star}$ through differences in the full trajectories. Strict cancellation then requires $\sum_i \widehat A_i W_{i,t^\star}=0$, while the sufficient condition for equal-weight cancellation is $W_{1,t^\star}=\cdots=W_{G,t^\star}$. In continuous parameter spaces, these constraint sets have measure zero; hence non-cancellation is a structural norm.

\subsection{Example: cancellation versus non-cancellation}
\label{sec:min-derivation}

\begin{figure*}[t]
\centering
\begin{tikzpicture}[
  >=Latex, font=\small,
  tok/.style={draw, rounded corners=2pt, minimum width=1.3cm, minimum height=0.6cm, inner sep=2pt},
  shared/.style={tok, fill=blue!10},
  diverge/.style={tok, fill=orange!22},
  coef/.style={font=\footnotesize}
]
\node[font=\bfseries] at (2.0, 2.5) {(a) GRPO (token-decomposed)};
\node at (-0.4, 1.3) {$\tau_1$:};
\node[shared]  at (0.7, 1.3) {answer};
\node[shared]  at (2.0, 1.3) {is};
\node[diverge] at (3.3, 1.3) {122};
\node[coef]    at (4.7, 1.3) {$\widehat A_1=-A$};
\node[coef]    at (0.7, 0.75) {$r$};
\node[coef]    at (2.0, 0.75) {$r$};
\node[coef]    at (3.3, 0.75) {$r_1$};

\node at (-0.4, -0.2) {$\tau_2$:};
\node[shared]  at (0.7, -0.2) {answer};
\node[shared]  at (2.0, -0.2) {is};
\node[diverge] at (3.3, -0.2) {132};
\node[coef]    at (4.7, -0.2) {$\widehat A_2=+A$};
\node[coef]    at (0.7, -0.75) {$r$};
\node[coef]    at (2.0, -0.75) {$r$};
\node[coef]    at (3.3, -0.75) {$r_2$};

\node[align=center] at (2.0, -1.7)
  {Shared-token aggregation: $r\,(\widehat A_1+\widehat A_2)=0$ \\[1pt]
   \textcolor{green!45!black}{\textbf{$\checkmark$ cancellation (no learning tax)}}};

\begin{scope}[xshift=7.5cm]
\node[font=\bfseries] at (2.0, 2.5) {(b) GSPO (sequence-coupled)};
\node at (-0.4, 1.3) {$\tau_1$:};
\node[shared]  at (0.7, 1.3) {answer};
\node[shared]  at (2.0, 1.3) {is};
\node[diverge] at (3.3, 1.3) {122};
\node[coef]    at (4.7, 1.3) {$\widehat A_1=-A$};
\node[coef]    at (0.7, 0.75) {$W_1^\star$};
\node[coef]    at (2.0, 0.75) {$W_1^\star$};
\node[coef]    at (3.3, 0.75) {$W_1^\star$};

\node at (-0.4, -0.2) {$\tau_2$:};
\node[shared]  at (0.7, -0.2) {answer};
\node[shared]  at (2.0, -0.2) {is};
\node[diverge] at (3.3, -0.2) {132};
\node[coef]    at (4.7, -0.2) {$\widehat A_2=+A$};
\node[coef]    at (0.7, -0.75) {$W_2^\star$};
\node[coef]    at (2.0, -0.75) {$W_2^\star$};
\node[coef]    at (3.3, -0.75) {$W_2^\star$};

\node[align=center] at (2.0, -1.7)
  {Shared-token aggregation: $A(W_2^\star-W_1^\star)\neq 0$ \\[1pt]
   \textcolor{red!75!black}{\textbf{$\times$ non-cancellation (learning tax)}}};
\end{scope}
\end{tikzpicture}
\caption{Comparison of cancellation behavior on shared tokens for token-factorized and sequence-coupled objectives. Blue tokens are shared tokens, orange tokens are divergent tokens, and the value below each token is the corresponding effective token modulation $W_i^\star\equiv W_{i,t^\star}$. \textbf{(a) GRPO}: the modulation $r$ is determined only by the token at that position and is independent of the trajectory index; with zero-mean advantages $\sum_i\widehat A_i=0$, the group-aggregated gradient of the shared token cancels exactly. \textbf{(b) GSPO}: the effective modulation $W_i^\star$ is induced by the full sequence weight; differences in the terminal divergent token make $W_1^\star\neq W_2^\star$, so the within-group aggregation is no longer zero even at the shared token, accumulating a learning tax on reward-weak shared or template tokens.}
\label{fig:cancel-vs-noncancel}
\end{figure*}

We compare GRPO and GSPO in Figure~\ref{fig:cancel-vs-noncancel}. Let $G=2$ and suppose the group advantages satisfy $\widehat{A}_1=-\widehat{A}_2\triangleq -A$ with $A>0$. On the shared context-token pair $(h^\star,a^\star)$, assume $r_{1,t^\star}=r_{2,t^\star}=\rho$. The complete derivation is in Appendix~\ref{app:cancel-derivations}.

\paragraph{GRPO (token-decomposed): within-group cancellation.}
The effective token modulation $r_{i,t}$ is determined only by the current token and is independent of the trajectory index. At the shared token, the two trajectories are multiplied by the same scalar $\rho$. Together with $\widehat A_1+\widehat A_2=0$, this makes the group-aggregated gradient of the shared token exactly $\mathbf{0}$.

\paragraph{Asymmetric clipping breaks cancellation.}
Common piecewise operators in practice, such as $\min/\mathrm{clip}$ or threshold-based selection, modify the weight into $\tilde\omega_{i,t}(\theta)=\phi_i(r_{i,t}(\theta))$, where $\phi_i$ explicitly depends on the sign of $\widehat{A}_i$ and therefore forms a \emph{within-group inconsistent} piecewise mapping. Even if $r_{1,t^\star}=r_{2,t^\star}$, zero-mean advantages no longer guarantee cancellation as long as $\phi_1$ and $\phi_2$ induce different local gradient modulations, for example by entering different branches or having different local slopes. Exchangeability is then systematically broken. A full derivation for a typical asymmetric clipping case is given in Appendix~\ref{app:grpo-clip}.

\paragraph{Repair: symmetric clipping restores exchangeability.}
The root cause above is that the piecewise mapping $\phi_i$ explicitly depends on the sign of $\widehat{A}_i$. A minimal repair is a sign-independent symmetric trust-region clipping operator $\phi(r)\triangleq \mathrm{clip}(r,\,1-\varepsilon,\,1+\varepsilon)$, applied identically to all samples regardless of the sign of the advantage. Under a shared context-token event, the gradient modulation for each trajectory in the group is restored to the same scalar $\phi'(r^\star(\theta))r^\star(\theta)$. Combined with the zero-mean constraint $\sum_i\widehat{A}_i=0$, exact cancellation is recovered. We call this correction \emph{GRPO-fix}; see Appendix~\ref{app:sym-grpo-clip}. It is used as a comparison baseline in Section~\ref{sec:experiments}.

\paragraph{GSPO (sequence-coupled): within-group non-cancellation is structural.}
For GSPO, the sequence-level weight is $s_i(\theta)=\big(\pi_\theta(y_i\mid x)/\pi_{\theta_{\mathrm{old}}}(y_i\mid x)\big)^{1/T_i}$. By expanding the log derivative, the effective modulation applied to each token score in a linear segment of GSPO is $s_i(\theta)/T_i$. We therefore write the shared-position modulation as $W_i^\star\triangleq W_{i,t^\star}=s_i(\theta)/T_i$. Because $s_i$ depends on the full trajectory, divergent terminal tokens may make $W_1^\star\neq W_2^\star$ even when the shared token itself is identical. The gradient term aligned with the shared token simplifies to $\tfrac{A}{2}\big(W_2^\star-W_1^\star\big)\nabla_\theta\log\pi_\theta(a^\star\mid h^\star)+\cdots$. Cancellation is equivalent to $W_1^\star=W_2^\star$. This condition holds only on a measure-zero set defined by an exact equality constraint. Hence, under sequence-coupled weights, non-cancellation of shared tokens is a structural norm.

\section{Method: a decoupled group-relative gradient estimator aligned with the structural proposition}
\label{sec:method}

Rather than starting from a particular algorithm, we construct a \emph{general group-relative reinforcement-learning gradient form} to obtain a \textbf{decoupled gradient estimator} that is \emph{strictly aligned} with Proposition~\ref{prop:violate_cancel_implies_drift}. The design principle is simple: we do not redefine the advantage, introduce extra supervision, or change token-level gradient directions. We only remove the structural asymmetric terms identified by the proposition. The overall pipeline is shown in Figure~\ref{fig:example}, and a staged illustration is given in Appendix Figure~\ref{fig:dfpo-method}.

Consider a general group-relative objective
\begin{equation}
	\mathcal{J}(\theta)
	=
	\mathbb{E}_{x,\,\{\tau_i\}_{i=1}^G\sim \pi_{\theta_{\mathrm{old}}}}
	\left[
	\frac{1}{G}\sum_{i=1}^G
	w_i(\tau_i;\theta)\,\widehat{A}_i
	\right],
	\label{eq:method_general_obj}
\end{equation}
with the zero-mean constraint $\sum_i\widehat A_i=0$, where $w_i(\tau_i;\theta)$ is a weight depending on the full trajectory. Let $W_{i,t}\triangleq \alpha_{i,t}w_i(\tau_i;\theta)$ denote the modulation coefficient that actually multiplies the gradient of token $t$. DFPO applies stop-gradient within-group transformations directly to $W_{i,t}$ at shared positions, producing the \emph{decoupled gradient estimator}
\begin{equation}
        \widehat{\nabla_\theta J}_{\mathrm{dec}}
        =
        \mathbb{E}\!\left[
        \frac{1}{G}\sum_{i=1}^G
        \sum_{t=1}^{T_i}
        \widehat{A}_i\,\widetilde W_{i,t}\,
        \nabla_\theta \log \pi_\theta(y_{i,t} \mid x,y_{i,<t})
        \right],
    \label{eq:method_decoupled_estimator}
\end{equation}
where $\alpha_{i,t}\ge 0$ is the coefficient induced by the chosen weight aggregation form, such as $\alpha_{i,t}=1/T_i$ for length normalization. The full log-derivative expansion and the unified gradient template are provided in Appendix~\ref{app:method-derivation}.

\subsection{Two instantiations of the within-group transformation}
\label{sec:method_instances}

We propose two simple and effective transformations. Both aim to suppress asymmetric random modulation in shared-token gradient terms, thereby restoring or approximating within-group gradient cancellation on shared tokens.

\subsubsection{Transformation 1: Min-Replace}
\label{sec:method_minreplace}

Define
\begin{equation}
	W_{\min,t^\star}\triangleq \min_{j\in\{1,\dots,G\}} W_{j,t^\star}(\theta),
	\qquad
	\widetilde W_{i,t^\star} \triangleq W_{\min,t^\star},\ \ \forall i.
	\label{eq:method_minreplace_def}
\end{equation}
This transformation replaces the effective token-gradient modulation coefficient at the shared position within the group by the group minimum, hence the name Min-Replace. It ensures that all trajectories share the same non-advantage modulation scale at that position, removing the gradient non-cancellation contribution caused by within-group differences in $W_{i,t^\star}$. Importantly, this transformation does not make the total gradient zero, since the full gradient is a weighted sum over all token gradients. It only guarantees that cancellation is restored, or closely approximated, in the shared-token subspace where gradient directions are aligned; see Appendix~\ref{app:nonzero-grad}. If an implementation needs to map back to trajectory-level weights and $\alpha_{i,t^\star}>0$, one can set $\widetilde w_i=\widetilde W_{i,t^\star}/\alpha_{i,t^\star}$.

\subsubsection{Transformation 2: advantage-orthogonal reweighting}
\label{sec:method_advorth}

The second transformation does not require the effective token modulations to be equal. Instead, it imposes a \emph{minimal-perturbation reweighting} within the group to suppress systematic bias induced by sequence coupling. For a shared-token position $t^\star$, let
\begin{equation}
	\begin{aligned}
	\mathbf W_{t^\star}
	&\triangleq
	(W_{1,t^\star},\dots,W_{G,t^\star})^\top
	=
	\boldsymbol{\alpha}_{t^\star}\odot \mathbf w,\\
	\widetilde{\mathbf W}_{t^\star}
	&=
	\mathbf W_{t^\star}
	-
	\frac{\widehat{\mathbf A}^{\top}\mathbf W_{t^\star}}
	{\|\widehat{\mathbf A}\|_2^2}\,
	\widehat{\mathbf A}.
	\end{aligned}
	\label{eq:method_orthproj}
\end{equation}
Here $\boldsymbol{\alpha}_{t^\star}=(\alpha_{1,t^\star},\dots,\alpha_{G,t^\star})^\top$, and $\odot$ denotes elementwise multiplication. Length scaling enters only through the projected object $\mathbf W_{t^\star}=\boldsymbol{\alpha}_{t^\star}\odot\mathbf w$; Orth-Proj itself only requires $\widehat{\mathbf A}^{\top}\widetilde{\mathbf W}_{t^\star}=0$. If $\|\widehat{\mathbf A}\|_2=0$, no projection is applied and $\widetilde{\mathbf W}_{t^\star}=\mathbf W_{t^\star}$. If an implementation needs to map back to trajectory-level weights and $\alpha_{i,t^\star}>0$, one can set $\widetilde w_i=\widetilde W_{i,t^\star}/\alpha_{i,t^\star}$. For length-averaged GSPO, $\alpha_{i,t^\star}=1/T_i$. If nonnegative weights must be maintained, Positive Orth-Proj/QP from Appendix~\ref{sec:gvpo_positive_orthproj} can be used.

\paragraph{Alignment with the proposition: why this mitigates learning tax and entropy collapse.}
Proposition~\ref{prop:violate_cancel_implies_drift} shows that the common structural root of learning tax and entropy collapse is token-level asymmetric, non-canceling gradients caused by sequence-coupled weights. Both transformations weaken or eliminate within-group asymmetric modulation in shared-token gradient terms, or its correlation with the advantage vector $\widehat{\mathbf A}$. They therefore restore or approximate within-group gradient cancellation in the shared-token subspace, systematically reducing ineffective updates to weak-reward tokens and suppressing probability drift among equivalent correct solutions.

\paragraph{Theoretical cost: a bias-variance trade-off, not an unbiased estimator.}
We emphasize that DFPO is a \emph{structural stabilization correction}, not an unbiased estimator of the original importance-sampling objective. The within-group transformations break the Radon-Nikodym form of the original IS weights and thus introduce bias. Under the stop-gradient convention, however, this bias does not flip the update direction of any token: each trajectory's update sign is still determined by $\widehat{A}_i$. Its effective role is to apply conservative proportional shrinkage to large-ratio or tail samples within the group, substantially reducing the variance of asymmetric modulation and alleviating the negative feedback loop in which clipping or KL constraints are over-triggered and the signal is then removed. When training remains in a small trust region, so that the relative dispersion of the post-clipping effective modulation $\bar W_{i,t^\star}$ at shared positions is small, the bias upper bound approaches zero with that dispersion; see Appendix~\ref{sec:minreplace_no_reverse}. In short, DFPO accepts controlled bias in exchange for systematically reducing shared-token drift.

\subsection{Testable predictions: lower learning tax leads to better final performance and more stable convergence}
\label{sec:predictions}

Based on the structural repair, we propose three predictions that can be tested on HMMT25, AIME25, and LiveCodeBench. (i) \textbf{Compute efficiency}: under matched compute, the number of steps needed to reach a fixed performance threshold, $\mathrm{Steps}(\mathrm{Score}\ge\kappa)$, should decrease. (ii) \textbf{Convergence stability}: the second-difference jitter of the training curve, $\mathrm{Jitter}_2(m)\triangleq\tfrac{1}{T-2}\sum_{t}|m_{t+2}-2m_{t+1}+m_t|$, should be lower than that of the baseline. (iii) \textbf{Final performance}: reducing reward-irrelevant drift should yield higher final evaluation metrics. The metric definitions and compute-matched setup are detailed in Appendix~\ref{app:predictions-details}.

\section{Experiments}
\label{sec:experiments}

This section validates the three predictions in Section~\ref{sec:predictions} under a compute-matched protocol. We compare baseline methods with our method, using two within-group transformations: Min-Replace and Orth-Proj. For fairness, all methods are matched to the same training compute budget, with the same total number of generated tokens and the same number of parameter-update steps. Training and evaluation settings are identical. The DFPO algorithm, Drift Fixing Policy Optimization implemented through our within-group transformations, and its hyperparameters are detailed in Appendix~\ref{sec:gvpo}.

\textbf{Tasks and datasets.} We evaluate the proposed method on mathematical and coding reasoning benchmarks, including \textit{HMMT25}~\cite{balunovic2026matharena}, \textit{AIME25}~\cite{maa_aime2025}, and \textit{LiveCodeBench v6 (25.02-25.05)}~\cite{jain2025livecodebench}.

\textbf{Base models.} Qwen3-32B and Qwen3-Next-80B-A3B-Thinking are both from the Qwen3 family~\cite{yang2025qwen3technicalreport}.

\textbf{Baselines and comparison setup.} We compare against (1) GSPO, (2) GRPO, and (3) GRPO-fix, which repairs the asymmetric clipping in GRPO according to our design principle. Algorithmic details are provided in Appendix~\ref{app:sym-grpo-clip}, and experimental parameter settings are given in Appendix~\ref{app:impl-details}.

\paragraph{Compute-matched protocol.} We match total training compute by enforcing the following constraints: (i) the same total number of generated tokens, (ii) the same number of optimizer parameter-update steps, and (iii) the same rollout budget per prompt. All methods use the same model, decoding strategy, batching scheme, and hardware configuration.

\begin{table*}[t]
	\caption{Results on Qwen3-32B, Qwen3-Next-80B-A3B-Thinking, and GPT-OSS-120B. We report the mean over five random seeds and the corresponding 95\% bootstrap confidence interval (mean $\pm$ 95\% CI). Under a paired bootstrap test, the improvements over the baselines are statistically significant ($p<0.01$). Inference settings are provided in Appendix~\ref{app:inference-settings}.}
	\label{tab:scaled_result_qwen3}
	\centering
	\setlength{\tabcolsep}{3pt}
	\renewcommand{\arraystretch}{1.10}
	\resizebox{\textwidth}{!}{
	\begin{tabular}{lcccccccccc}
		\toprule
		\multirow{2}{*}{\textbf{Method}} &
		\multicolumn{3}{c}{\textbf{Qwen3-32B Acc avg@32 (\%)}} &
		\multicolumn{3}{c}{\textbf{Qwen3-Next Acc avg@32 (\%)}} &
		\multicolumn{3}{c}{\textbf{GPT-OSS-120B Acc avg@32 (\%)}} \\
		\cmidrule(lr){2-4}\cmidrule(lr){5-7}\cmidrule(lr){8-10}
		& \textbf{AIME25} & \textbf{LiveCodeBench} & \textbf{HMMT25}
		& \textbf{AIME25} & \textbf{LiveCodeBench} & \textbf{HMMT25}
		& \textbf{AIME25} & \textbf{LiveCodeBench} & \textbf{HMMT25} \\
		\midrule
		DFPO (Min-Replace) & 82.5$\pm$1.1 & 71.6$\pm$0.7 & \textbf{61.4}$\pm$1.5 & \textbf{93.2}$\pm$0.9 & 75.1$\pm$0.8 & 80.1$\pm$1.1 & \textbf{98.3}$\pm$0.5 & 84.7$\pm$0.6 & \textbf{91.9}$\pm$0.9 \\
		DFPO (Orth-Proj)   & \textbf{82.6} & \textbf{71.6} & 61.3 & 93.1 & \textbf{75.2} & \textbf{80.2} & 98.2 & \textbf{84.8} & 91.7 \\
		\midrule
		GSPO               & 76.9$\pm$1.3 & 64.7$\pm$1.5 & 55.8$\pm$0.9 & 89.8$\pm$1.7 & 71.0$\pm$1.4 & 75.8$\pm$1.2 & 96.5$\pm$0.5 & 80.3$\pm$1.2 & 87.6$\pm$1.0 \\
		GRPO               & 76.9$\pm$1.2 & 64.5$\pm$0.8 & 55.5$\pm$1.5 & 89.7$\pm$1.2 & 70.9$\pm$0.9 & 75.5$\pm$1.8 & 96.3$\pm$0.7 & 80.1$\pm$0.8 & 87.4$\pm$1.5 \\
		GRPO-fix           & 80.6$\pm$1.4 & 69.1$\pm$1.2 & 59.6$\pm$0.9 & 91.9$\pm$1.5 & 73.9$\pm$1.1 & 79.4$\pm$0.7 & 98.0$\pm$0.6 & 83.6$\pm$0.9 & 90.8$\pm$0.6 \\
		\bottomrule
	\end{tabular}}
\end{table*}

\section{Results and analysis}

\begin{figure}
	\centering
	\includegraphics[width=0.9\linewidth]{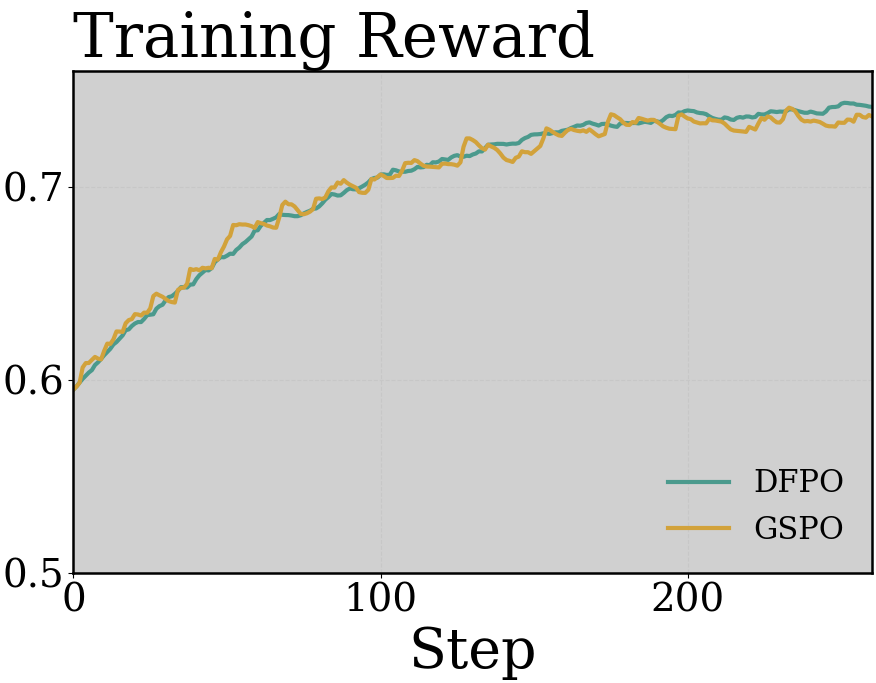}
	\caption{Training curves on Qwen3-Next-80B-A3B-Thinking. Under the \textbf{compute-matched} setting, \textbf{DFPO} achieves substantially higher training efficiency than GSPO.}

	\label{fig:quxiaotu}
\end{figure}

As shown in Figure~\ref{fig:quxiaotu}, under the \textbf{compute-matched} setting, DFPO needs only 91\% of the compute used by GSPO to reach the training reward threshold of 0.70, validating Prediction 1. Its training-return curve is also smoother, with second-difference jitter satisfying $\mathrm{Jitter}_2(m^{\mathrm{DFPO}})<\mathrm{Jitter}_2(m^{\mathrm{GSPO}})$, validating Prediction 2. Table~\ref{tab:scaled_result_qwen3} shows that DFPO achieves higher final performance on AIME25, LiveCodeBench, and HMMT25, validating Prediction 3.

\subsection{Mechanism validation: sequence-coupled breakdown and decoupled recovery}
\label{sec:mechanism}

We report two mechanism metrics: (i) the asymmetry of within-group modulation, $\mathrm{Asym}(t)\triangleq\mathrm{Var}_{i}(W_{i,t}(\theta)\widehat A_i)$, where larger values indicate that shared tokens are harder to cancel within the group; and (ii) the update-energy ratio over a high-frequency token bucket $B$, $\mathrm{Energy}(B)\triangleq \sum_{t\in B}\|\nabla_\theta\ell_t\|_2/\sum_t\|\nabla_\theta\ell_t\|_2$, which reflects the proportion of reward-irrelevant updates. In our experiments, the proposed method substantially reduces $\mathrm{Asym}$ and lowers $\mathrm{Energy}$ in high-frequency buckets, indicating that the learning tax is structurally reduced. Complete metric definitions and the sampling protocol are provided in Appendix~\ref{app:mechanism-metrics}.

\subsection{Ablation study}
\begin{table}[ht]
	\caption{Ablation results on Qwen3-32B.}
	\label{tab:ablation}
	\centering
	\renewcommand{\arraystretch}{1.1}
	\resizebox{0.50\textwidth}{!}{
		\begin{tabular}{lcccc}
			\toprule
			\textbf{Model variant} & \textbf{AIME25 (\%)} & \textbf{LiveCodeBench (\%)} & \textbf{HMMT25 (\%)} &  \\
			\midrule
			\textbf{DFPO(Min-Replace)}    & \textbf{82.5} & \textbf{71.6} & \textbf{61.4} &  \\
			GSPO (baseline)               & 76.9 & 64.7 & 55.8  \\
			DFPO (no stop-grad)           & 75.6 & 63.8 & 54.9  \\
			DFPO (scale by 0.5)           & 75.1 & 63.2 & 54.1   \\
			\bottomrule
		\end{tabular}
	}
\end{table}
Table~\ref{tab:ablation} shows that removing stop-gradient or replacing the within-group correction with global scaling substantially degrades performance, confirming that the gains come from restoring within-group cancellation rather than merely shrinking the update; see Appendix~\ref{app:ablation} for discussion.

\section{Conclusion}
\label{sec:conclusion}

This paper formalizes a recurring form of training instability as a \emph{structural boundary}: under sparse terminal rewards, the stability of group-based reinforcement learning is constrained by the \emph{token-level exchangeability} of the objective. Through gradient decomposition and minimal counterexamples, we show that sequence-level multiplicative coupling breaks this symmetry, causing gradient cancellation to fail and inducing accumulated learning tax and entropy collapse. We argue that restoring this symmetry is a \emph{necessary but not sufficient} condition for mitigating these issues. Building on this insight, we introduce two minimal within-group transformations that restore the cancellation structure on shared tokens. Experiments validate our testable predictions and demonstrate the explanatory and practical value of this structural boundary.

\section{Limitations}
\label{sec:limitations}

(1) This paper characterizes a structural \emph{necessary condition}. With terminal rewards alone, credit assignment remains unidentifiable, so the proposed repair can only mitigate or delay instability rather than guarantee its complete removal.
(2) Our key derivations reveal the structural difference between decomposed and coupled objectives. Their interactions with mechanisms such as clipping, pruning, and normalization require further analysis.
(3) Projection-based transformations may introduce bias. We use them as minimal viable constructions for validating the proposition, but we do not fully explore the space of optimal implementations or a broader set of baselines.

\section*{Broader impact}

GRPO and its group-relative reinforcement-learning variants have become important techniques for post-training reasoning-oriented large language models, and many large-model development teams use them to improve mathematical, coding, and long-chain reasoning capabilities. Therefore, the structural issue identified in this paper affects not only a single algorithmic detail, but potentially a class of LLM training paradigms that are now widely used. If groupwise objectives continually produce non-canceling ineffective updates on shared or high-frequency tokens, training can consume large numbers of expensive sampling tokens, GPU hours, and optimization steps without producing commensurate capability gains. This work may substantially improve sample efficiency, reduce training cost, shorten iteration cycles, and improve the stability and reproducibility of training outcomes. The direction therefore has clear economic and engineering value, especially in industrial settings that rely on GRPO-like algorithms for reasoning-model post-training.

\newpage
\appendix

\section{Ablation study}
\label{app:ablation}

Table~\ref{tab:ablation} compares DFPO with two variants: DFPO (no stop-grad), which does not freeze the within-group transformation during backpropagation, and DFPO (scale by 0.5), which changes Eq.~\eqref{eq:gvpo_min_on_bar} to $\widetilde W_{i,t^\star}(\theta)\triangleq 0.5\,\bar W_{i,t^\star}(\theta)$. Both variants are substantially weaker than Min-Replace on all three benchmarks. First, removing stop-gradient consistently degrades performance, showing that the within-group transformation should act as a constant coefficient. Second, global scaling by 0.5 rules out the alternative explanation that the gain merely comes from a more conservative step size: the improvement arises from the within-group correction of the shared-token gradient cancellation structure, not from simply shrinking the update. This result is consistent with our mechanism analysis of the exchangeability-cancellation necessary condition.

\section{Proofs of Proposition 3.1 and Corollary 3.2}
\label{app:proofs}

\paragraph{Proof of Proposition~\ref{prop:violate_cancel_implies_drift}.}
Let $\delta\theta\triangleq \theta^+-\theta=\eta g_{t^\star}$. For a fixed context $h^\star$, consider the function
\[
\varphi(\delta\theta)\triangleq
\mathrm{KL}\!\left(\pi_{\theta+\delta\theta}(\cdot\mid h^\star)\,\big\|\,\pi_{\theta}(\cdot\mid h^\star)\right).
\]
We have $\varphi(\mathbf{0})=0$. Since the KL divergence is minimized at identical distributions, $\nabla_{\delta\theta}\varphi(\mathbf{0})=\mathbf{0}$. A second-order Taylor expansion at $\delta\theta=\mathbf{0}$ gives
\begin{equation}
	\varphi(\delta\theta)
	=
	\frac{1}{2}\,\delta\theta^\top
	\left.\nabla_{\delta\theta}^2 \varphi(\delta\theta)\right|_{\delta\theta=\mathbf{0}}
	\delta\theta
	+ O(\|\delta\theta\|^3).
\end{equation}
A standard result states that the Hessian at $\delta\theta=\mathbf{0}$ equals the conditional Fisher information matrix:
\[
\left.\nabla_{\delta\theta}^2 \varphi(\delta\theta)\right|_{\delta\theta=\mathbf{0}}
= F_{\theta}(h^\star).
\]
Substituting $\delta\theta=\eta g_{t^\star}$ yields the second-order expansion in \eqref{eq:drift_positive_under_violation}. When $g_{t^\star}\neq \mathbf{0}$ and $g_{t^\star}^\top F_\theta(h^\star)g_{t^\star}>0$, the leading term is strictly positive. Choosing sufficiently small $\eta$ ensures that the third-order remainder does not change the sign, giving a strictly positive KL divergence and hence an ineffective update. \qed

\paragraph{Proof of Corollary~\ref{cor:logodds_accum_entropy}.}
For a fixed $y$, a first-order Taylor expansion gives $\Delta \log \pi_\theta(y\mid x)=\eta\,c(x)\kappa(y;\theta)+O(\eta^2)$, where $c(x)>0$ absorbs normalization constants. Subtracting the two expressions yields \eqref{eq:cor_logodds_drift}. Summing over $k=0,\dots,K-1$ gives linear accumulation of the log odds. As the absolute value of this quantity diverges, the normalized binary probability $p_k=\frac{\pi_{\theta_k}(y_a\mid x)}{\pi_{\theta_k}(y_a\mid x)+\pi_{\theta_k}(y_b\mid x)}$ satisfies $p_k\to 0$ or $1$, so the binary entropy $h(p_k)\to 0$. \qed


\section{Full derivation for the example in Section 3.4: GRPO cancellation and GSPO non-cancellation}
\label{app:cancel-derivations}

To highlight the structural difference, we analyze the objectives in a linear region, ignoring the piecewise effects of $\min/\mathrm{clip}$. This local analysis does not affect the conclusion about gradient decomposability versus coupling. Consider a group of size $G=2$ with trajectories $\tau_1,\tau_2$ for the same input $x$, and suppose the group advantages satisfy
\begin{equation}
	\widehat{A}_1 = -\widehat{A}_2 \triangleq -A, \qquad A>0.
	\label{eq:two-adv}
\end{equation}
Define the token-level importance ratio as
\begin{equation}
	r_{i,t}(\theta) = \frac{\pi_\theta(y_{i,t} \mid h_t^{(i)})}{\pi_{\theta_{\text{old}}}(y_{i,t} \mid h_t^{(i)})}.
	\label{eq:ratio_ht}
\end{equation}

\paragraph{GRPO (token-decomposed form).}
In the token-decomposed linear segment of GRPO, the objective can be written as
\begin{equation}
	\mathcal{J}_{\text{tok}}(\theta) = \frac{1}{2} \sum_{i=1}^2 \sum_{t} r_{i,t}(\theta) \widehat{A}_i.
	\label{eq:Jtok}
\end{equation}
For a shared context-token pair $(h^\star,a^\star)$, suppose that at time step $t^\star$, the two trajectories satisfy $h_{t^\star}^{(1)} = h_{t^\star}^{(2)} = h^\star$ and $a_{t^\star}^{(1)} = a_{t^\star}^{(2)} = a^\star$. In a local neighborhood, $r_{1,t^\star}(\theta) = r_{2,t^\star}(\theta) = \rho$. The gradient contribution of this token pair is
\begin{align}
	\nabla_\theta \mathcal{J}_{\text{tok}}^{(t^\star)}
	&= \frac{1}{2} \left( \widehat{A}_1 \nabla_\theta r_{1,t^\star}(\theta) + \widehat{A}_2 \nabla_\theta r_{2,t^\star}(\theta) \right) \nonumber \\
	&= \frac{1}{2} \left( \widehat{A}_1 \rho + \widehat{A}_2 \rho \right) \nabla_\theta \log \pi_\theta(a^\star \mid h^\star) \nonumber \\
	&= \frac{\rho}{2} (\widehat{A}_1 + \widehat{A}_2) \nabla_\theta \log \pi_\theta(a^\star \mid h^\star) = \mathbf{0},
\end{align}
where the last equality uses $\widehat{A}_1 + \widehat{A}_2 = 0$. Thus, \emph{the gradients of the shared context-token cancel within the group}.

\paragraph{Asymmetric clipping breaks cancellation.}
The exact cancellation above relies on a key assumption: the shared token is multiplied by the \emph{same effective coefficient} within the group. In practice, many methods introduce piecewise operators, such as $\min/\mathrm{clip}$ or threshold-based selection, that modify each trajectory's effective weight to
\begin{equation}
	\tilde{\omega}_{i,t}(\theta) = \phi_i\left( r_{i,t}(\theta) \right),
	\label{eq:piecewise_weight}
\end{equation}
where $\phi_i(\cdot)$ is a \emph{within-group inconsistent} piecewise mapping induced by the sign of the advantage, threshold triggers, or implementation details. Even if $r_{1,t^\star}=r_{2,t^\star}$ under a shared context-token event, $\widehat{A}_1+\widehat{A}_2=0$ no longer guarantees cancellation if $\phi_1$ and $\phi_2$ induce different local gradient modulations or trigger different segments. Thus, \emph{asymmetric piecewise clipping can systematically break exchangeability and cancellation even under token-decomposed structure}.

\paragraph{GSPO (sequence-coupled form).}
In the sequence-coupled linear segment of GSPO, write the effective token modulation at the shared time step $t^\star$ as
\[
	W_i^\star \triangleq W_{i,t^\star}.
\]
The gradient term aligned with $\nabla_\theta\log\pi_\theta(a^\star\mid h^\star)$ is
\begin{align}
	\nabla_\theta \mathcal{J}_{\text{seq}}^{(t^\star)}
	&=
	\frac{1}{2}\left(
	\widehat{A}_1 W_1^\star+\widehat{A}_2 W_2^\star
	\right)\nabla_\theta\log\pi_\theta(a^\star\mid h^\star)
	+\cdots \nonumber\\
	&=
	\frac{1}{2}\left(
	-A\cdot W_1^\star
	+A\cdot W_2^\star
	\right)\nabla_\theta\log\pi_\theta(a^\star\mid h^\star)
	+\cdots \nonumber\\
	&=
	\frac{A}{2}\left(W_2^\star-W_1^\star\right)\nabla_\theta\log\pi_\theta(a^\star\mid h^\star)
	+\cdots.
	\label{eq:seq-noncancel}
\end{align}
As long as $W_1^\star \neq W_2^\star$, this term is strictly nonzero, meaning that \emph{the shared context-token gradient cannot cancel within the group}. Equation~\eqref{eq:seq-noncancel} shows that cancellation holds only on the degenerate set $W_1^\star=W_2^\star$, which is defined by an exact equality constraint and has measure zero in a continuous parameter space.


\section{Complete definitions of the testable predictions and mechanism metrics in Section 4.5}
\label{app:predictions-details}

\paragraph{Prediction 1 (compute efficiency).}
If the within-group transformation effectively suppresses ineffective updates and removes interference, learning efficiency should improve. Under matched compute, the method should reach a fixed performance threshold faster:
\begin{equation}
	\text{Steps}\big(\mathrm{Score}\ge \kappa\big)\ \downarrow,
	\label{eq:pred_threshold}
\end{equation}
where $\mathrm{Score}$ is the evaluation metric for the corresponding benchmark.

\paragraph{Prediction 2 (convergence stability).}
If the within-group transformation focuses the update on effective gradients, local oscillations in the training curve should decrease. We use second-difference jitter:
\begin{equation}
	\mathrm{Jitter}_2(m)\triangleq \frac{1}{T-2}\sum_{t=1}^{T-2}\big|m_{t+2}-2m_{t+1}+m_t\big|,
	\label{eq:pred_jitter2}
\end{equation}
and predict under matched compute that
\begin{equation}
	\mathrm{Jitter}_2\!\big(m^{\mathrm{ours}}\big)
	\;<\;
	\mathrm{Jitter}_2\!\big(m^{\mathrm{base}}\big).
	\label{eq:pred_stability_jitter2}
\end{equation}

\paragraph{Prediction 3 (final performance).}
Ineffective updates repeatedly bias high-frequency tokens or templates that are unrelated to the reward, continually reshaping the surface distribution, namely inducing distribution drift. This causes parameter drift and performance regression on unoptimized tasks or subdistributions, a form of catastrophic forgetting. Reducing such ineffective updates should lead to higher final performance:
\begin{equation}
	\mathrm{Score}_{\mathrm{final}}\ \uparrow.
	\label{eq:pred_finalscore}
\end{equation}

\section{Complete definitions of mechanism metrics}
\label{app:mechanism-metrics}

\paragraph{Asymmetry of gradient modulation.}
\begin{equation}
	\mathrm{Asym}(t)
	=
	\mathrm{Var}_{i\in\{1,\dots,G\}}
	\!\left(
	W_{i,t}(\theta)\,\widehat{A}_i
	\right),
	\label{eq:asymmetry}
\end{equation}
where $W_{i,t}\widehat{A}_i$ is the effective modulation of the shared-token gradient term. Larger $\mathrm{Asym}$ indicates that shared or similar tokens are less likely to cancel within the group. The decoupled transformation substantially reduces $\mathrm{Asym}$ in experiments.

\paragraph{Energy on high-frequency tokens.}
\begin{equation}
	\mathrm{Energy}(B)
	=
	\frac{\sum_{t\in B}\|\nabla_\theta \ell_t\|_2}
	{\sum_{t}\|\nabla_\theta \ell_t\|_2},
	\label{eq:energy}
\end{equation}
where $B$ is a token bucket defined by frequency. Compared with the baseline, our method lowers $\mathrm{Energy}$ in high-frequency buckets, indicating a reduction in reward-irrelevant updates, or learning tax.


\section{Full derivation of the decoupled gradient estimator}
\label{app:method-derivation}

\paragraph{Unified gradient template.}
Applying the log-derivative trick to the objective in \eqref{eq:method_general_obj} gives
\begin{equation}
	\begin{aligned}
		\nabla_\theta \mathcal{J}(\theta)
		= \;&
		\mathbb{E}\!\left[
		\frac{1}{G}\sum_{i=1}^G
		\sum_{t=1}^{T_i}
		\widehat{A}_i\,W_{i,t}(\theta)\,\mathbf{g}_{i,t}(\theta)
		\right], \\
		\mathbf{g}_{i,t}(\theta)
		\triangleq \;&
		\nabla_\theta \log \pi_\theta\!\big(y_{i,t} \mid x,y_{i,<t}\big),
	\end{aligned}
	\label{eq:method_grad_template}
\end{equation}
where $W_{i,t}(\theta)\triangleq \alpha_{i,t}w_i(\tau_i;\theta)$, and $\alpha_{i,t}\ge 0$ is the coefficient induced by the chosen weight aggregation form, for example $\alpha_{i,t}=1/T_i$ for length normalization.

\paragraph{Design principle.}
Based on Proposition~\ref{prop:violate_cancel_implies_drift}, the design principle of DFPO is:
\begin{quote}
	\emph{We do not try to change token-level gradient directions, redefine advantages, or introduce additional supervision. We only remove the structural asymmetric terms introduced by sequence-coupled effective modulation, which correspond directly to the proposition.}
\end{quote}
Accordingly, we focus on the modulation coefficient $W_{i,t}=\alpha_{i,t}w_i(\tau_i;\theta)$ that actually multiplies the token gradient, and apply deterministic transformations only \emph{within the group} to weaken or eliminate its ability to break token-level gradient symmetry. Replacing $W_{i,t}$ in the unified template~\eqref{eq:method_grad_template} by stop-gradient $\widetilde W_{i,t}$ yields Eq.~\eqref{eq:method_decoupled_estimator} in the main text. Orth-Proj directly uses the effective token-modulation vector $\mathbf W_{t^\star}$ at the shared position and projects it onto the subspace orthogonal to $\widehat{\mathbf A}$. For length-averaged GSPO, $\alpha_{i,t^\star}=1/T_i$ is already included in $\mathbf W_{t^\star}$; other implementations use the $\alpha_{i,t^\star}$ corresponding to their own token aggregation form.


\section{DFPO method diagram}
\label{app:dfpo-method-fig}

\begin{figure}[H]
\centering
\resizebox{0.98\textwidth}{!}{
\begin{tikzpicture}[
  >=Latex,
  font=\footnotesize,
  node distance=0.62cm and 0.55cm,
  stage/.style={draw, rounded corners=3pt, text width=0.26\textwidth, minimum height=1.05cm,
    align=center, fill=gray!7, inner sep=4pt},
  op/.style={draw, rounded corners=3pt, text width=0.26\textwidth, minimum height=1.05cm,
    align=center, fill=blue!7, inner sep=4pt},
  good/.style={draw, rounded corners=3pt, text width=0.34\textwidth, minimum height=1.05cm,
    align=center, fill=green!8, inner sep=4pt},
  warn/.style={draw, rounded corners=3pt, text width=0.34\textwidth, minimum height=1.05cm,
    align=center, fill=red!7, inner sep=4pt},
  method/.style={draw, rounded corners=2pt, text width=0.21\textwidth, minimum height=0.75cm,
    align=center, fill=white, inner sep=3pt, font=\scriptsize},
  arr/.style={-{Latex[length=2.2mm]}, thick, shorten >=2pt, shorten <=2pt}
]
\node[stage] (sample)
  {Sample $G$ responses\\for the same input $x$\\$\{\tau_i\}_{i=1}^{G}$};
\node[stage, right=of sample] (score)
  {Compute group-relative advantages\\$\sum_i\widehat A_i=0$};
\node[stage, right=of score] (weight)
  {Build effective token modulation\\$W_{i,t}=\alpha_{i,t}w_i$};

\node[op, below=of score] (transform)
  {Apply stop-gradient\\within-group transform\\$\widetilde{\mathbf W}_{t^\star}=\mathcal T(\mathbf W_{t^\star},\widehat{\mathbf A})$};
\node[method, below left=0.45cm and 0.08cm of transform] (minrep)
  {Min-Replace\\$\widetilde W_{i,t^\star}=\min_j W_{j,t^\star}$};
\node[method, below right=0.45cm and 0.08cm of transform] (orth)
  {Orth-Proj\\$\widehat{\mathbf A}^{\top}\widetilde{\mathbf W}_{t^\star}=0$};

\node[good, right=of transform] (grad)
  {Use decoupled gradient estimator\\$\frac1G\sum_i\sum_t
  \widehat A_i\widetilde W_{i,t}\nabla\log\pi_\theta(y_{i,t}\mid h_t^{(i)})$};

\node[warn, below=1.15cm of minrep, xshift=-0.10\textwidth] (before)
  {Before transform\\$\sum_i \widehat A_i W_{i,t^\star}\mathbf g_{\mathrm{shared}}\neq \mathbf 0$\\asymmetric modulation $\Rightarrow$ learning tax};
\node[good, right=0.08\textwidth of before] (after)
  {After transform\\$\sum_i \widehat A_i\widetilde W_{i,t^\star}\mathbf g_{\mathrm{shared}}\approx \mathbf 0$\\restored shared-token cancellation};

\draw[arr] (sample) -- (score);
\draw[arr] (score) -- (weight);
\draw[arr] (weight.south west) -- (transform.north east);
\draw[arr] (transform.east) -- (grad.west);
\draw[arr] (transform.south) -- (minrep.north);
\draw[arr] (transform.south) -- (orth.north);
\draw[arr] (before) -- node[above, font=\scriptsize, fill=white, inner sep=1pt] {structural correction} (after);
\end{tikzpicture}}
\caption{DFPO method diagram. Multiple responses to the same input are used to compute group-relative advantages and sequence-coupled weights, forming the effective token modulation $W_{i,t}$. DFPO then applies a stop-gradient within-group transformation to obtain $\widetilde W_{i,t}$ for the decoupled gradient estimator. Min-Replace uses a shared within-group scale at the selected position, whereas Orth-Proj decorrelates the modulation vector $\mathbf W_{t^\star}$ from the advantage vector $\widehat{\mathbf A}$. The correction preserves token-gradient directions while restoring, or approximating, shared-token cancellation and reducing the learning tax.}
\label{fig:dfpo-method}
\end{figure}


\section{Group size \texorpdfstring{$G$}{G} and relative gains}
\label{app:group-size-ablation}

Table~\ref{tab:ablation_g_wggn_minreplace_qwen3_32b_aime25_horizontal} reports the relative improvement $\Delta$ (DFPO Min-Replace minus GSPO). Increasing $G$ makes trajectories within a group more heterogeneous, making the baseline more susceptible to non-canceling shared/high-frequency tokens and accumulated learning tax. DFPO restores or approximates cancellation in this subspace through within-group alignment, yielding larger relative gains.

\begin{table}[ht]
	\caption{Group-size ablation. Values are DFPO Min-Replace scores minus GSPO scores on Qwen3-32B and AIME25.}
	\label{tab:ablation_g_wggn_minreplace_qwen3_32b_aime25_horizontal}
	\centering
	\small
	\setlength{\tabcolsep}{8pt}
	\renewcommand{\arraystretch}{1.15}
	\begin{tabular}{c c c c c}
		\toprule
		\textbf{G} & 2 & 4 & 8 & 16 \\
		\midrule
		\textbf{avg@32} & 3.1 & 4.2 & 5.6 & 5.7 \\
		\bottomrule
	\end{tabular}
\end{table}

\section{A unified toy problem: learning tax and entropy collapse}
\label{sec:toy-problem}

Using GSPO as an example, we present a unified toy problem to illustrate the two core failure modes identified in Proposition~\ref{prop:violate_cancel_implies_drift}: (i) the learning tax caused by ineffective updates to reward-irrelevant tokens, and (ii) entropy collapse among semantically equivalent correct solutions. The example is intentionally minimal, fully algebraic, and independent of surface forms in any particular language.

\subsection{Prompt and trajectories}

Consider the prompt
\[
x: \text{"What is 11*12?"}
\]
and the following three trajectories:
\begin{align}
	\tau_1 &= (\text{"The answer is 122."}) \qquad \text{(incorrect)}, \\
	\tau_2 &= (\text{"The answer is 132."}) \qquad \text{(correct)}, \\
	\tau_3 &= (\text{"11*12=132."}) \qquad \text{(correct)}.
\end{align}
The reward depends only on semantic correctness of the final answer:
\begin{equation}
	R(\tau_1) = 0, \quad R(\tau_2) = R(\tau_3) = 1.
\end{equation}

We consider a group of size $G=3$ containing the above trajectories. The group-relative advantage is defined as
\begin{equation}
	\widehat{A}_i = R(\tau_i) - \frac{1}{3} \sum_{j=1}^3 R(\tau_j),
\end{equation}
which gives
\begin{equation}
	\widehat{A}_1 = -\frac{2}{3}, \quad \widehat{A}_2 = \widehat{A}_3 = \frac{1}{3}.
	\label{eq:toy-advantages}
\end{equation}

Let $W_{i,t}(\theta)$ denote the effective token modulation of the $t$-th token in trajectory $i$.

\subsection{Learning tax: ineffective updates to reward-irrelevant tokens}
\label{sec:toy-learning-tax}

\paragraph{Minimal case ($G=2, T=3$): sequence coupling prevents shared-prefix gradients from canceling.}
To highlight the structural root, first consider the minimal group containing two trajectories $\{\tau_1,\tau_2\}$, whose group-relative advantages satisfy the zero-mean constraint:
\begin{equation}
	\widehat{A}_2 = -\widehat{A}_1 \triangleq A, \quad A > 0.
\end{equation}
Write the two trajectories as token sequences of length $T=3$:
\begin{align}
	\tau_1 &= (a_1 = \text{"answer"}, a_2 = \text{"is"}, a_3 = \text{"122"}), \\
	\tau_2 &= (a_1 = \text{"answer"}, a_2 = \text{"is"}, a_3 = \text{"132"}).
\end{align}
Assume that the first two steps have the same context-token pairs:
\begin{equation}
	r_{1,1} = r_{2,1} = \rho_1, \quad r_{1,2} = r_{2,2} = \rho_2,
\end{equation}
but the final token differs:
\begin{equation}
	r_{1,3} = \lambda_1, \quad r_{2,3} = \lambda_2, \quad \lambda_1 \neq \lambda_2.
\end{equation}
Under sequence-coupled weighting, the difference at the final step typically induces different effective modulations at the shared prefix, so $W_{1,t}\neq W_{2,t}$. Therefore, for any shared prefix token, $t=1$ or $t=2$, the effective gradient modulation along $\nabla_\theta \log \pi_\theta(a_t \mid h_t)$ is proportional to
\begin{equation}
	\widehat{A}_1 W_{1,t} + \widehat{A}_2 W_{2,t}
	= A\big(W_{2,t}-W_{1,t}\big),
\end{equation}
which is strictly nonzero when $W_{1,t}\neq W_{2,t}$.
This shows that \emph{even when the reward depends only on the final token, sequence-coupled importance weighting can prevent shared-prefix gradients from canceling within the group}, resulting in systematic updates to reward-irrelevant tokens. This is the minimal counterexample for the learning tax.

\subsection{Entropy collapse: probability drift among equivalent correct solutions}
\label{sec:toy-entropy}

Although $\tau_2$ and $\tau_3$ have different surface forms, they express the same mathematical fact and are both correct. They also have the same group-relative advantage: $\widehat{A}_2 = \widehat{A}_3 = \frac{1}{3}$. However, because of differences in tokenization, length, and local likelihoods, their sequence-level weights are generally different:
\begin{equation}
	s_2(\theta) \neq s_3(\theta).
\end{equation}
In the linear approximation region of the update,
\begin{equation}
	\Delta \log \pi_\theta(\tau_i) \approx \eta \cdot c \cdot \widehat{A}_i s_i(\theta), \quad c > 0.
\end{equation}
The log-probability ratio between the two correct trajectories becomes
\begin{equation}
	\log \frac{\pi_{\theta^+}(\tau_2)}{\pi_{\theta^+}(\tau_3)} \approx \log \frac{\pi_{\theta}(\tau_2)}{\pi_{\theta}(\tau_3)} + \eta c \left( \widehat{A}_2 s_2(\theta) - \widehat{A}_3 s_3(\theta) \right).
	\label{eq:toy-ratio-drift}
\end{equation}
Even when $\widehat{A}_2 = \widehat{A}_3$, any small difference between $s_2(\theta)$ and $s_3(\theta)$ causes the probability ratio to drift. Repeated updates accumulate this drift multiplicatively, concentrating probability mass on one surface form and reducing the policy entropy over semantically equivalent correct answers. This is observed as entropy collapse.

\subsection{Discussion}

This unified toy problem shows that the learning tax and entropy collapse share the same structural root: sequence-coupled importance weighting breaks token-level gradient symmetry. The example is deliberately concise, yet it captures the key failure modes observed in large-scale training. These phenomena arise independently of language, reward design, or implementation details, highlighting their structural nature.

\section{Implementation details: a within-group reweighting instance for sequence-coupled objectives}
\label{sec:gvpo}

The goal of this section is not to propose an engineering improvement for a particular baseline, but to \textbf{validate the structural proposition that sequence-coupled weights break token-level gradient symmetry}. We choose a representative group-relative reinforcement-learning algorithm as a vehicle for instantiating this proposition. Specifically, we use GSPO as the analysis and implementation platform because its objective contains both (i) group-relative advantages and (ii) sequence-level, coupled importance weighting, directly supporting the structural phenomena outlined in Section~\ref{sec:proposition}.

Without changing GSPO's \emph{group-relative advantage estimator} or \emph{sequence-level clipping} framework, we apply a \emph{minimal within-group transformation} to the sequence-level importance-ratio vector in each group, obtaining DFPO, or Drift Fixing Policy Optimization. The sole purpose of this transformation is to enforce the key orthogonality condition $\sum_i \widehat{A}_i\widetilde W_{i,t^\star}=0$ within the group, thereby restoring the gradient cancellation structure on shared tokens and turning the structural source of learning tax and entropy collapse into an empirically testable difference.

\subsection{GSPO formulation}

GSPO uses the following sequence-level optimization objective:
\begin{align}
   	\mathcal{J}_\text{GSPO} (\theta) =
   	\mathbb{E}_{ x \sim \mathcal{D},\, \{y_i\}_{i=1}^G \sim \pi_{\theta_\text{old}}( \cdot \mid x) }
   	\left[
   	\frac{1}{G} \sum_{i=1}^{G}
   	\min \left( s_{i}(\theta)\widehat{A}_{i}, \, \mathrm{clip} \left( s_{i}(\theta), 1 - {\varepsilon}, 1 + {\varepsilon} \right)\widehat{A}_{i} \right)
   	\right],
   	\label{eq:gspo}
\end{align}
where the within-group relative advantage estimator is
\begin{align}
   	\widehat{A}_{i} =
   	\frac{r(x, y_i) - \mathrm{mean} \left( \{ r(x, y_i) \}_{i=1}^G \right)}
   	{ \mathrm{std} \left( \{ r(x, y_i) \}_{i=1}^G \right) } ,
   	\label{eq:adv_gspo}
\end{align}
and the sequence-level importance-sampling ratio is defined by normalized sequence likelihood:
\begin{align}
   	s_{i}(\theta)
   	&=
   	\left( \frac{ \pi_{\theta} (y_i \mid x) }{ \pi_{\theta_\text{old}} (y_i \mid x)} \right)^{\frac{1}{|y_i|}}
   	=
   	\exp \left(
   	\frac{1}{|y_i|}
   	\sum_{t=1}^{|y_i|}
   	\log \frac{ \pi_{\theta} (y_{i,t} \mid x, y_{i,<t}) }
   	{ \pi_{\theta_\text{old}} (y_{i,t} \mid x,y_{i,<t})}
   	\right).
   	\label{eq:si_def}
\end{align}
By definition, the group-relative advantages satisfy the zero-mean condition $\sum_{i=1}^G \widehat{A}_i = 0$.

\subsection{DFPO: a within-group transformation that restores shared-token gradient cancellation}

Let the within-group sequence-weight vector, advantage vector, and effective token-gradient modulation vector at a shared token position $t^\star$ be
\begin{align}
   	\mathbf{s}(\theta) \triangleq (s_1(\theta),\dots,s_G(\theta))^\top,\qquad
	\widehat{\mathbf{A}} \triangleq (\widehat{A}_1,\dots,\widehat{A}_G)^\top,\qquad
	\mathbf W_{t^\star}(\theta)
	\triangleq
	\boldsymbol{\alpha}_{t^\star}\odot\mathbf s(\theta).
   	\label{eq:vec_def}
\end{align}
Here $\boldsymbol{\alpha}_{t^\star}=(\alpha_{1,t^\star},\dots,\alpha_{G,t^\star})^\top$. For length-averaged GSPO, $\alpha_{i,t^\star}=1/|y_i|$. If an algorithm uses a different token aggregation rule, then $\alpha_{i,t^\star}$ is the effective aggregation coefficient corresponding to that implementation. The core idea of DFPO is to transform the effective modulation of shared-token gradient terms within each group, $\mathbf W_{t^\star}(\theta)\mapsto \widetilde{\mathbf W}_{t^\star}(\theta)$, and to use $\widetilde W_{i,t^\star}$ as a decoupled modulation at shared positions in the gradient estimator. This restores, or approximately restores, within-group gradient cancellation on shared tokens.

Because clipping is present, we first need to extract $\widehat{A}$, clip the weight, and then apply the within-group transformation. Note that
\[
\min(s_i\widehat{A}_i,\ \mathrm{clip}(s_i)\widehat{A}_i)
\neq
\widehat{A}_i\min(s_i,\ \mathrm{clip}(s_i))
\quad\text{when } \widehat{A}_i<0.
\]
The equality fails because a negative $\widehat{A}_i$ reverses the inequality. We therefore need a \emph{sign-consistent} rewriting.

\subsection{Step 1: rewrite GSPO as "clip first, then multiply by the advantage"}

Define
\begin{equation}
   	c_i(\theta)\triangleq \mathrm{clip}\!\left(s_i(\theta),\,1-\varepsilon,\,1+\varepsilon\right),
   	\qquad
   	s_i(\theta)=\left(\frac{\pi_\theta(y_i\mid x)}{\pi_{\theta_{\text{old}}}(y_i\mid x)}\right)^{\frac{1}{|y_i|}}.
   	\label{eq:def_si_ci}
\end{equation}

We introduce a \textbf{sign-aware post-clipping weight}:
\begin{equation}
   	\bar{s}_i(\theta)
   	\triangleq
   	\begin{cases}
   		\min\!\big(s_i(\theta),\,c_i(\theta)\big), & \widehat{A}_i \ge 0,\\[4pt]
   		\max\!\big(s_i(\theta),\,c_i(\theta)\big), & \widehat{A}_i < 0.
   	\end{cases}
   	\label{eq:postclip_bar_s_piecewise}
\end{equation}

The original GSPO objective
\begin{equation}
   	\mathcal{J}_\text{GSPO}(\theta)
   	=
   	\mathbb{E}\!\left[
   	\frac{1}{G}\sum_{i=1}^G
   	\min\!\big(s_i(\theta)\widehat{A}_i,\ c_i(\theta)\widehat{A}_i\big)
   	\right]
   	\label{eq:gspo_original_again}
\end{equation}
can then be rewritten exactly as
\begin{equation}
   	\mathcal{J}_\text{GSPO}(\theta)
   	=
   	\mathbb{E}\!\left[
   	\frac{1}{G}\sum_{i=1}^G
   	\widehat{A}_i\,\bar{s}_i(\theta)
   	\right].
   	\label{eq:gspo_equiv_factored}
\end{equation}

\paragraph{Equivalence in one line.}
When $\widehat{A}_i\ge 0$, $\min(s_i\widehat{A}_i,c_i\widehat{A}_i)=\widehat{A}_i\min(s_i,c_i)$. When $\widehat{A}_i<0$, $\min(s_i\widehat{A}_i,c_i\widehat{A}_i)=\widehat{A}_i\max(s_i,c_i)$. This yields \eqref{eq:gspo_equiv_factored}.

\subsection{Step 2: apply a within-group transformation to the post-clipping effective modulation vector}

The following two transformations correspond to Min-Replace and Orth-Proj defined earlier. Both act on the post-clipping effective modulation vector at the shared position:
\[
	\bar{\mathbf W}_{t^\star}\triangleq \boldsymbol{\alpha}_{t^\star}\odot\bar{\mathbf{s}}.
\]

\subsubsection{Transformation 1: Min-Replace, or equalizing the post-clipping effective modulation}
\begin{equation}
	\bar W_{\min,t^\star}(\theta)\triangleq \min_{j\in\{1,\dots,G\}}\bar W_{j,t^\star}(\theta),
	\qquad
	\widetilde W_{i,t^\star}(\theta)\triangleq \bar W_{\min,t^\star}(\theta)\ \ \forall i.
   	\label{eq:gvpo_min_on_bar}
\end{equation}

\subsubsection{Transformation 2: Positive Orth-Proj, or Orth-Proj with a nonnegativity constraint}
\label{sec:gvpo_positive_orthproj}

In practice, effective modulation coefficients must be nonnegative. The unconstrained orthogonal projection in \eqref{eq:method_orthproj} may produce negative components. We therefore replace unconstrained Orth-Proj with a \textbf{minimal-perturbation projection in the nonnegative domain}: keep $\widetilde{\mathbf W}_{t^\star}\succeq \mathbf{0}$, make it as close as possible to the original post-clipping effective modulation $\bar{\mathbf W}_{t^\star}$, and simultaneously satisfy
\[
	\widehat{\mathbf A}^{\top}\widetilde{\mathbf W}_{t^\star}=0.
\]
For length-averaged GSPO, $\bar{\mathbf W}_{t^\star}=\boldsymbol{\alpha}_{t^\star}\odot\bar{\mathbf{s}}$ already includes the length-aware scaling $\alpha_{i,t^\star}=1/|y_i|$.

Specifically, \textbf{Positive Orth-Proj} is defined as the following quadratic program:
\begin{equation}
	\widetilde{\mathbf W}_{t^\star}(\theta)
	=
	\arg\min_{\mathbf{v}\in\mathbb{R}^G}
	\ \frac{1}{2}\|\mathbf{v}-\bar{\mathbf W}_{t^\star}(\theta)\|_2^2
	\quad
	\text{s.t.}\quad
	\widehat{\mathbf A}^{\top} \mathbf{v}=0,\ \ \mathbf{v}\succeq \mathbf{0}.
	\label{eq:gvpo_posproj_qp}
\end{equation}

\paragraph{Feasibility and implementation cost.}
Because group-relative advantages satisfy $\sum_{i=1}^G \widehat{A}_i=0$, any constant vector $c\mathbf{1}$ satisfies $\widehat{\mathbf A}^{\top}(c\mathbf{1})=0$. Therefore, \eqref{eq:gvpo_posproj_qp} always has a nonnegative feasible solution. When the advantage vector contains both positive and negative components, the projection can be interpreted as redistributing effective modulation coefficients between the positive- and negative-advantage subsets to achieve a zero inner product. If the degenerate case $\widehat{\mathbf A}=\mathbf 0$ occurs, the constraint is inactive and we keep $\bar{\mathbf W}_{t^\star}$ unchanged.

In implementation, we solve \eqref{eq:gvpo_posproj_qp} directly. The QP dimension is only the group size $G$. Since training already requires computing advantages, weights, and within-group aggregation for each group, the additional cost of Positive Orth-Proj is negligible relative to model forward and backward passes.

\paragraph{Discussion.}
Equation~\eqref{eq:gvpo_posproj_qp} exactly satisfies the group orthogonality constraint $\widehat{\mathbf A}^{\top}\widetilde{\mathbf W}_{t^\star}=0$, equivalently $\sum_i\widehat{A}_i\widetilde W_{i,t^\star}=0$, and gives the minimum-$L_2$ perturbation solution relative to $\bar{\mathbf W}_{t^\star}$ in the nonnegative domain. If an implementation needs to map back to sequence-level weights and $\alpha_{i,t^\star}>0$, one can set $\widetilde{s}_i=\widetilde W_{i,t^\star}/\alpha_{i,t^\star}$. This scalar constraint only removes systematic drift in the shared-token subspace and does not make the overall policy gradient zero; see Appendix~\ref{app:nonzero-grad}.


\section{Why \texorpdfstring{$\sum_{i=1}^G \widehat A_i\widetilde W_{i,t^\star}=0$}{sum A W equals zero} does not mean the gradient is zero}
\label{app:nonzero-grad}

This appendix clarifies a common and important point of confusion. Even when we construct effective token-gradient modulation coefficients $\widetilde W_{i,t^\star}$ within each group so that
\begin{equation}
\sum_{i=1}^G \widehat{A}_i\widetilde W_{i,t^\star} = 0,
   	\label{eq:orth_cond}
\end{equation}
this \emph{does not imply} that the corresponding policy-gradient update is zero. Intuitively, \eqref{eq:orth_cond} constrains only the sum of scalar coefficients at the shared token position $t^\star$, while the policy gradient remains a weighted sum of "scalar modulation times each token-gradient vector in each trajectory." Unless these gradient vectors are identical within the group, a zero scalar sum does not necessarily make the vector sum zero. For length-averaged GSPO, $\widetilde W_{i,t}=\alpha_{i,t}\widetilde{s}_i=\widetilde{s}_i/|y_i|$, so length normalization is already included in the effective token-gradient modulation.

\vspace{0.25em}
\paragraph{Key implementation convention: no backpropagation through $\widetilde W_{i,t}$, or stop-gradient.}
In the actual algorithm, $\widetilde W_{i,t}$ is obtained by applying a within-group transformation to the effective token-gradient modulation vector $\mathbf W_t(\theta)$ of the current group samples $(\{y_i\},\{\widehat{A}_i\})$. To ensure that this transformation acts as a structural correction or control variate, we use the \emph{stop-gradient} convention: during backpropagation, $\widetilde W_{i,t}$ is treated as a constant, and the transformation operator itself is not differentiated. This convention is consistent with common practice in PPO/GRPO/GSPO, where samples are drawn from $\pi_{\theta_{\mathrm{old}}}$ and gradients are stopped through the advantage $\widehat{A}_i$.

Under this convention, \eqref{eq:orth_cond} does not make the overall gradient degenerate. Instead, it restores the cancellation structure in the subspace of shared token gradient directions, as shown below.

\subsection{Review of the GSPO gradient without clipping terms}

For comparison, we start from GSPO. In a linear segment, the GSPO objective is
\begin{equation}
   	\mathcal{J}_{\mathrm{GSPO}}(\theta)
   	=
   	\mathbb{E}_{x\sim\mathcal{D},\,\{y_i\}_{i=1}^G\sim \pi_{\theta_{\mathrm{old}}}(\cdot|x)}
   	\left[
   	\frac{1}{G}\sum_{i=1}^G s_i(\theta)\widehat{A}_i
   	\right],
\end{equation}
where
\begin{equation}
   	s_i(\theta)
   	=
   	\left(\frac{\pi_\theta(y_i|x)}{\pi_{\theta_{\mathrm{old}}}(y_i|x)}\right)^{\frac{1}{|y_i|}}
   	=
   	\exp\!\left(
   	\frac{1}{|y_i|}
   	\sum_{t=1}^{|y_i|}
   	\log\frac{\pi_\theta(y_{i,t}|x,y_{i,<t})}{\pi_{\theta_{\mathrm{old}}}(y_{i,t}|x,y_{i,<t})}
   	\right).
   	\label{eq:s_def_again}
\end{equation}
Using the log-derivative trick, the gradient is
\begin{align}
   	\nabla_{\theta} \mathcal{J}_\text{GSPO} (\theta)
   	=
   	\mathbb{E}\!\left[
   	\frac{1}{G} \sum_{i=1}^{G}
   	s_{i}(\theta) \widehat{A}_{i}
   	\cdot \frac{1}{|y_i|} \sum_{t=1}^{|y_i|} \nabla_{\theta} \log \pi_{\theta} (y_{i,t} \mid x, y_{i,<t})
   	\right].
   	\label{eq:gspo_grad_again}
\end{align}

By contrast, the gradient of GRPO in token-decomposed form is
\begin{align}
   	\nabla_{\theta} \mathcal{J}_\text{GRPO} (\theta)
   	=
   	\mathbb{E}\!\left[
   	\frac{1}{G} \sum_{i=1}^{G} \widehat{A}_{i}
   	\cdot \frac{1}{|y_i|} \sum_{t=1}^{|y_i|}
   	\frac{ \pi_{\theta} (y_{i,t} \mid x, y_{i,<t}) }{ \pi_{\theta_\text{old}} (y_{i,t} \mid x,y_{i,<t})}
   	\nabla_{\theta} \log \pi_{\theta} (y_{i,t} \mid x, y_{i,<t})
   	\right].
   	\label{eq:grpo_grad_again}
\end{align}

\subsection{Gradient after the within-group transformation: why it is not zero}

Now consider applying a within-group transformation $\mathbf W_t \mapsto \widetilde{\mathbf W}_t$ to the effective token-gradient modulation and applying \emph{stop-gradient} to $\widetilde{\mathbf W}_t$ in the update. During gradient computation, $\widetilde W_{i,t}$ is treated as a constant coefficient and is not differentiated.

The GSPO gradient in the DFPO-like form is
\begin{align}
	\nabla_{\theta} \widetilde{\mathcal{J}} (\theta)
	=
	\mathbb{E}\!\left[
	\frac{1}{G}\sum_{i=1}^{G}\sum_{t=1}^{|y_i|}
	\widehat{A}_{i}\,\widetilde W_{i,t}
	\nabla_{\theta} \log \pi_{\theta} (y_{i,t} \mid x, y_{i,<t})
	\right],
	\label{eq:gvpo_like_grad}
\end{align}
where $\widetilde W_{i,t}$ is not backpropagated with respect to $\theta$ and is treated as a constant coefficient. For length-averaged GSPO, if the underlying implementation still uses sequence weights, then $\widetilde W_{i,t}=\alpha_{i,t}\widetilde{s}_i=\widetilde{s}_i/|y_i|$.

Notice that \eqref{eq:gvpo_like_grad} is a weighted sum of \emph{vectors}:
\[
\sum_i\sum_t \widehat A_i\widetilde W_{i,t}\nabla_\theta \log\pi_\theta(\cdot).
\]
Even if the scalar constraint \eqref{eq:orth_cond} holds at a shared position, it does not imply that the vector sum is zero unless the gradient vectors inside the sum are identical within the group. This is the key to understanding why \eqref{eq:orth_cond} restores cancellation for shared tokens but does not remove the overall learning signal.

\subsection{Formal decomposition: which terms cancel and which do not}

Let the gradient direction of a single token be
\begin{equation}
   	\mathbf{g}_{i,t}(\theta)
   	\triangleq
   	\nabla_\theta \log \pi_{\theta}(y_{i,t}\mid x,y_{i,<t}).
\end{equation}
Expanding \eqref{eq:gvpo_like_grad} gives
\begin{equation}
   	\nabla_{\theta} \widetilde{\mathcal{J}} (\theta)
   	=
   	\mathbb{E}\!\left[
   	\frac{1}{G}\sum_{i=1}^G\sum_{t=1}^{|y_i|}
\underbrace{\left(\widehat{A}_i\widetilde W_{i,t}\right)}_{\text{scalar coefficient}}
   	\mathbf{g}_{i,t}(\theta)
   	\right].
   	\label{eq:grad_vec_sum}
\end{equation}

\paragraph{Exact cancellation on the shared-token subset.}
Consider a shared context-token event: there exists a fixed context-action pair $(h^\star,a^\star)$ such that all trajectories in the group have a matching token at the corresponding time step:
\[
(y_{i,<t},y_{i,t})=(h^\star,a^\star).
\]
They therefore produce the same gradient direction:
\begin{equation}
   	\mathbf{g}_{i,t}(\theta)=\mathbf{g}^\star(\theta)
	\quad \text{for all }(i,t)\text{ in the shared event}.
   	\label{eq:shared_grad_dir}
\end{equation}
Extracting these shared terms from \eqref{eq:grad_vec_sum} gives their combined contribution:
\begin{equation}
   	\mathbf{g}^\star(\theta)\cdot
   	\left(
   	\sum_{(i,t)\in \mathcal{S}(h^\star,a^\star)}
\widehat{A}_i\widetilde W_{i,t}
   	\right),
   	\label{eq:shared_contrib}
\end{equation}
where $\mathcal{S}(h^\star,a^\star)$ denotes the index set of the shared context-token pair.

Therefore, when the shared event corresponds to the same token position $t^\star$ within the group and we construct $\widetilde{\mathbf W}_{t^\star}$ such that
\begin{equation}
\sum_{i=1}^G \widehat A_i\widetilde W_{i,t^\star}=0,
\end{equation}
the combined gradient at that shared position cancels exactly. For length-averaged GSPO, $\widetilde W_{i,t^\star}=\widetilde{s}_i/|y_i|$ explicitly carries length normalization, so the constraint directly corresponds to the true token-gradient coefficient. This forms the theoretical basis for how our within-group transformation restores token-level symmetry, or exchangeability.

\paragraph{Non-shared token gradients are generally nonzero.}
For generic tokens, however, $\mathbf{g}_{i,t}(\theta)$ differs across trajectories and time steps. Thus, \eqref{eq:grad_vec_sum} is a weighted sum of vectors pointing in different directions. Even if the scalar constraint \eqref{eq:orth_cond} holds, these different vectors need not cancel completely. Formally,
\begin{equation}
\sum_{i=1}^G \widehat A_i\widetilde W_{i,t^\star}=0
   	\ \not\Rightarrow\
\sum_{i,t}\widehat A_i\widetilde W_{i,t}\mathbf{g}_{i,t}(\theta)=\mathbf{0},
\end{equation}
unless all $\mathbf{g}_{i,t}(\theta)$ are collinear and the coefficients are exactly matched, which is a highly degenerate case.

Therefore, the role of \eqref{eq:orth_cond} is not to make the update zero. Rather, it \emph{selectively} removes or suppresses systematic drift from tokens that repeatedly appear within the group, are weakly correlated with the reward, and have highly aligned gradient directions. This reduces the learning tax and slows entropy collapse while preserving learning signals for genuinely reward-relevant decision tokens.

\section{Learning tax: accumulated ineffective updates and their consequences}
\label{app:learning-tax-consequences}

\paragraph{Background and definition.}
In sequence-based reinforcement-learning fine-tuning with terminal rewards, including group-relative methods and their variants, we define \emph{ineffective updates}, or the \emph{learning tax}, as parameter updates that continue to occur but statistically \textbf{do not produce a net gain in the target capability}; they may even be unrelated to the target. This mainly appears as a lower signal-to-noise ratio (SNR) of the gradient, a mismatch between the update direction and true credit assignment, and large updates that are absorbed by clipping or constraint mechanisms without making useful progress. When such ineffective updates accumulate during training, they have systematic consequences for optimization dynamics, capability generalization, memory stability, and engineering cost. We discuss these consequences below.

\subsection{Optimization dynamics: degraded convergence and training instability}
\paragraph{(1) Effective signal is drowned by noise: higher gradient variance and lower sample efficiency.}
Ineffective updates inject task-irrelevant noise into the parameters, reducing the proportion of effective gradient under the same compute budget. This directly causes slower convergence, a larger number of tokens and optimization steps to reach the same performance threshold, noisier training curves, and longer plateaus.

\paragraph{(2) Adaptive optimizer statistics are contaminated: momentum and second-moment estimates drift.}
For adaptive optimizers such as Adam, AdamW, and Adafactor, parameter updates depend on historical first-moment and second-moment estimates. Long-term accumulation of ineffective gradients causes momentum directions and scaling factors to be dominated by \emph{incorrect gradient statistics}, which then mis-scales the true gradient. In practice, this manifests as the need for a smaller learning rate to remain stable, higher sensitivity to random seeds, and more frequent training oscillations.

\paragraph{(3) Clipping and trust-region mechanisms are over-triggered: effective updates are weakened.}
In PPO/GRPO/GSPO-style objectives, abnormal importance ratios and policy drift trigger clipping or KL/trust-region constraints. When ineffective updates make the importance-ratio distribution more extreme, clipping activates more frequently and \emph{also weakens effective signals}, creating a negative feedback loop: more updates lead to more clipping, and more clipping leaves less signal.

\subsection{Capabilities and generalization: mode collapse, reward hacking, and degraded generalization}
\paragraph{(1) Entropy collapse and mode collapse: reduced exploration and diversity.}
When training repeatedly reinforces surface patterns that correlate with reward but are not causally tied to the true reasoning process, such as fixed templates, surface formatting, or redundant phrasing, the model gradually contracts to a small number of high-probability modes. This reduces output diversity and severely harms exploration and branch search in long-chain reasoning.

\paragraph{(2) Reward hacking becomes easier.}
Ineffective updates push the model toward paths that obtain high reward more easily without genuinely solving the task, such as catering to evaluator preferences, over-explaining, or formatting outputs in a particular way. This often improves offline metrics while degrading true task quality, especially in out-of-distribution evaluations.

\paragraph{(3) Overfitting to spurious features blocks the learning of transferable reasoning operators.}
When credit assignment is incorrect, the model is more likely to learn dataset biases, prompt triggers, or surface correlations rather than transferable reasoning operators. This can yield stable performance on the training set or in-distribution validation set while substantially reducing cross-task transfer and robustness.

\subsection{Memory stability: catastrophic forgetting and capability drift}
\paragraph{(1) Catastrophic forgetting: existing capabilities are damaged by ineffective perturbations.}
Ineffective updates impose persistent perturbations on many parameters and damage the structural integrity of existing capabilities, especially capabilities associated with fragile equilibria such as linguistic fluency, factual consistency, and aligned behavior. Empirically, this often appears as improvement on some benchmarks together with unexplained degradation of seemingly unrelated capabilities.

\paragraph{(2) Capability drift and irreproducibility: high sensitivity to seeds and batches.}
When training is dominated by ineffective updates, the optimization trajectory resembles a momentum-driven random walk. The same setting can then produce substantially different outcomes under different random seeds, unstable model versions, frequent regression-test failures, and higher engineering maintenance cost.

\subsection{Resources and engineering: higher cost and harder subsequent correction}
\paragraph{(1) Diminishing marginal returns: less effective progress under the same compute budget.}
Accumulated learning tax is equivalent to spending expensive online sampling tokens and optimization steps on updates with no net gain, substantially lowering sample efficiency and increasing training cost.

\paragraph{(2) Subsequent alignment or safety correction becomes harder and requires stronger pullback.}
Once ineffective updates push the model away from the original parameter basin, later corrections with SFT or preference-alignment data require stronger training intensity, which can introduce new side effects such as overfitting to alignment data or further forgetting of base capabilities.

\subsection{Behavioral level: inconsistent long-horizon reasoning and weaker self-correction}
\paragraph{(1) Reasoning-chain consistency is lost: intermediate steps contradict each other more often.}
Ineffective updates damage token-level or step-level consistency constraints, causing more logical jumps, contradictions, and unnecessary reasoning branches in long-chain reasoning.

\paragraph{(2) Stable self-correction loops are harder to form: the same errors recur.}
If updates cannot be accurately attributed to erroneous tokens or steps, the model struggles to establish a stable error-detection and correction mechanism. The same errors then recur after training, making the reasoning process less trustworthy and difficult to repair with a small amount of additional training.

\paragraph{Summary.}
The learning tax is not merely a waste of compute. Its long-term accumulation contaminates optimizer statistics, triggers clipping feedback, induces entropy collapse and reward hacking, exacerbates catastrophic forgetting, and ultimately degrades model capability, generalization, and version stability. This phenomenon provides a unified mechanism for understanding training instability and low sample efficiency in long-horizon reasoning, and motivates the design of process-level credit assignment or within-group consistent weighting strategies.

\section{Why sign-asymmetric clipping breaks within-group cancellation}
\label{app:grpo-clip}

\begin{corollary}[GRPO-style clipping can cause within-group cancellation failure outside the clipping interval]
	\label{cor:clip-break-cancel}
	Consider the clipped surrogate objective commonly used in GRPO, or its equivalent rewriting, where piecewise selection is determined by the sign of the advantage.
	Let $\bar w = \mathrm{clip}(w, 1-\varepsilon, 1+\varepsilon)$. For a scalar weight $w$ and advantage $A$, the sign-sensitive equivalent form is
	\[
	\min(wA, \bar w A) =
	\begin{cases}
		A \cdot \min(w, \bar w), & \text{if } A \ge 0,\\
		A \cdot \max(w, \bar w), & \text{if } A < 0.
	\end{cases}
	\]
	In a shared context-token event, suppose there are two trajectories $i,j$ in the group such that
	$w_{i,t^\star} = w_{j,t^\star} = w$, $\widehat{A}_i > 0$, and $\widehat{A}_j < 0$.
	If $w \notin [1-\varepsilon, 1+\varepsilon]$ causes the two trajectories to enter \emph{different branches} of the piecewise operator, for example one in the unclipped branch and the other in the clipped-constant branch, then the true gradient coefficients of the shared token are no longer identical within the group. This can cause within-group cancellation to fail and produce nonzero drift.
\end{corollary}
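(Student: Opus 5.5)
The plan is a direct computation of the per-trajectory gradient coefficients in the clipped surrogate, followed by a case split on where the shared ratio $w$ lies relative to the trust region. The first step is to verify the sign-sensitive factorization stated in Corollary~\ref{cor:clip-break-cancel}. If $A\ge 0$, multiplication by $A$ preserves order, so $\min(wA,\bar wA)=A\min(w,\bar w)$. If $A<0$, multiplication by $A$ reverses order, so $\min(wA,\bar wA)=A\max(w,\bar w)$. This is the same one-line argument used for \eqref{eq:gspo_equiv_factored}.

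Next I differentiate each branch under the usual convention. The weight is $w=r_{i,t^\star}(\theta)$, with $\nabla_\theta w=w\,\mathbf g^\star(\theta)$ at a shared context-token pair, and $\widehat A$ is held fixed. The unclipped branch (selecting $w$) then contributes the coefficient $A\,w$ on $\mathbf g^\star$. The clipped-constant branch (selecting $\bar w=1\pm\varepsilon$ when $w$ is outside the interval) contributes $0$, since $\bar w$ is locally constant in $\theta$.

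The case split, with $w=w_{i,t^\star}=w_{j,t^\star}$, $\widehat A_i>0$ and $\widehat A_j<0$, runs as follows.
\begin{itemize}
\item If $w>1+\varepsilon$, then $\min(w,\bar w)=\bar w$, so $i$ is clipped with coefficient $0$. Also $\max(w,\bar w)=w$, so $j$ is unclipped with coefficient $\widehat A_j w$.
\item If $w<1-\varepsilon$, the roles swap: $i$ is unclipped with coefficient $\widehat A_i w$, and $j$ is clipped with coefficient $0$.
\end{itemize}
In both cases the effective modulations $W_{i,t^\star}$ and $W_{j,t^\star}$ differ: one equals $w$ and the other equals $0$. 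The exchangeability of Definition~\ref{def:exchangeability} therefore fails for this pair.

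The last step turns this pairwise asymmetry into non-cancellation in the sense of Definition~\ref{def:cancellation}. For $G=2$, or whenever $i,j$ are the only members of the group, the aggregated coefficient is $\tfrac1G\widehat A_j w$ or $\tfrac1G\widehat A_i w$. This is nonzero because $w>0$ and the advantages are nonzero, so $g_{t^\star}\neq\mathbf 0$. Proposition~\ref{prop:violate_cancel_implies_drift} then applies, under its Fisher non-degeneracy assumption, and gives strictly positive KL drift at $h^\star$.

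For general $G$, the obstacle is that the remaining members can in principle compensate. The aggregated coefficient is $w\sum_{k\in U}\widehat A_k/G$, where $U$ is the set of unclipped indices. Because the branch depends only on $\mathrm{sign}(\widehat A_k)$ once $w$ is fixed outside the interval, $U$ is exactly the set of negative-advantage members (if $w>1+\varepsilon$) or the set of positive-advantage members (if $w<1-\varepsilon$). In either case $\sum_{k\in U}\widehat A_k$ is a sum of same-sign nonzero terms, so it is nonzero. This actually gives failure of cancellation for every $G$, which is stronger than the statement's ``can cause.''

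The one point I expect to need care is the kink at $w=1\pm\varepsilon$, where the branch derivative is undefined. I exclude it by requiring $w$ to lie strictly outside the interval, as the statement does.
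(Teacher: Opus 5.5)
Your proposal is correct and takes essentially the paper's route. The case split $w>1+\varepsilon$ versus $w<1-\varepsilon$, with the unclipped branch contributing $\widehat A\,w\,\mathbf g^\star(\theta)$ and the clipped-constant branch contributing $\mathbf 0$, is exactly the paper's $G=2$ computation with $\widehat A_1=-A$, $\widehat A_2=+A$.

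Your extension to arbitrary $G$ is a valid strengthening that the paper does not carry out. Branch membership is fixed by the sign of $\widehat A_k$, so the surviving coefficients share a sign and cannot sum to zero. The argument relies on every group member sharing the ratio $w$, which Definition~\ref{def:shared-event} guarantees. Zero-advantage members fall in the $\min$ branch and contribute nothing, so they do not affect the conclusion.
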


For example, let $G = 2$ and $\widehat{A}_1 = -A, \widehat{A}_2 = +A$. At the shared-token position, set $\nabla_\theta w = w\,\mathbf g^\star(\theta)$, where $\mathbf g^\star(\theta)=\nabla_\theta\log\pi_\theta(a^\star\mid h^\star)$.

\begin{itemize}
	\item If $w > 1 + \varepsilon$, then $\bar w = 1 + \varepsilon$. For the positive-advantage sample, $\min(wA, \bar w A) = \bar w A$; for the negative-advantage sample, $\min(w(-A), \bar w(-A)) = w(-A)$. Thus, the positive-advantage sample is in the clipped-constant branch and has true gradient $0$, while the negative-advantage sample is in the unclipped branch and has true gradient $-A w\,\mathbf g^\star(\theta)$. The true group-aggregated gradient of the shared token in this direction is
	\begin{equation}
		\big(-A w + 0\big)\,\mathbf g^\star(\theta)
		= -A w\,\mathbf g^\star(\theta)
		\neq \mathbf 0
		\quad (A>0,\ w>0,\ \mathbf g^\star(\theta)\neq \mathbf 0).
		\label{eq:clip-break-cancel-upper}
	\end{equation}

	\item If $w < 1 - \varepsilon$, then $\bar w = 1 - \varepsilon$. For the positive-advantage sample, $\min(wA, \bar w A) = wA$; for the negative-advantage sample, $\min(w(-A), \bar w(-A)) = \bar w(-A)$. Thus, the positive-advantage sample is in the unclipped branch and has true gradient $A w\,\mathbf g^\star(\theta)$, while the negative-advantage sample is in the clipped-constant branch and has true gradient $0$. The true group-aggregated gradient of the shared token in this direction is
	\begin{equation}
		\big(0 + A w\big)\,\mathbf g^\star(\theta)
		= A w\,\mathbf g^\star(\theta)
		\neq \mathbf 0
		\quad (A>0,\ w>0,\ \mathbf g^\star(\theta)\neq \mathbf 0).
		\label{eq:clip-break-cancel-lower}
	\end{equation}
\end{itemize}

In summary, \emph{clipping preserves within-group cancellation of shared tokens only when $w \in [1-\varepsilon,1+\varepsilon]$. Once $w$ leaves the clipping interval, the sign asymmetry of clipping makes the true gradient coefficients of within-group samples no longer interchangeable between the unclipped and clipped-constant branches, producing nonzero drift and learning tax.}

\section{Nonzero drift in the statistical significance of learning tax: from exact cancellation to an expectation-based lower bound}
\label{app:tax-lowerbound}

\begin{corollary}[Non-exchangeable weights induce nonzero expected drift on generic tokens]
	\label{cor:tax-lowerbound}
	Consider a class of frequent tokens $\mathcal{C}$, such as template words or function words, that are \emph{weakly correlated} with the terminal reward. Consider the goal of statistical cancellation during training:
	\[
	\mathbb{E}\!\left[\widehat{A} \mid y \in \mathcal{C}\right] \approx 0.
	\]
	If the gradient modulation of within-group tokens breaks exchangeability, so that under the condition $y_{i,t} \in \mathcal{C}$, the effective coefficient $W_{i,t}$ multiplying the token gradient is systematically correlated with the group-comparison signal $\widehat{A}_i$:
	\[
	\mathrm{Cov}\!\left(\widehat{A}_i, W_{i,t}(\theta) \ \middle|\ y_{i,t} \in \mathcal{C}\right) \neq 0,
	\]
	and if token-gradient directions within this token bucket are approximately aligned and non-degenerate in the shared/high-frequency subspace, then the aggregated group gradient of this token class is strictly nonzero in expectation:
	\[
	\mathbb{E}\!\left[
	\frac{1}{G}\sum_{i=1}^G
	\widehat{A}_i\,W_{i,t}(\theta)\,\mathbf g_{i,t}(\theta)
	\ \middle|\ y_{i,t} \in \mathcal{C}
	\right] \neq \mathbf{0}.
	\]
	This creates accumulated reward-irrelevant drift, or learning tax, during training, consistent with the KL-drift conclusion in Proposition~\ref{prop:violate_cancel_implies_drift}.
\end{corollary}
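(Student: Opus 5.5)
Within the token bucket $\mathcal{C}$, I would reduce the vector statement to a scalar one by pulling out a common gradient direction. The hypothesis says token-gradient directions in the bucket are approximately aligned and non-degenerate in the shared/high-frequency subspace. I formalize this as $\mathbf g_{i,t}(\theta)=\mathbf g_{\mathcal C}(\theta)+\boldsymbol\xi_{i,t}$, with $\mathbf g_{\mathcal C}\neq\mathbf 0$ a common direction. The residual $\boldsymbol\xi_{i,t}$ is assumed small relative to $\|\mathbf g_{\mathcal C}\|$, or at least is such that $\widehat A_i W_{i,t}\boldsymbol\xi_{i,t}$ has conditional mean of norm $\le\delta$. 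This is the analogue of the shared-event identity $\mathbf g_{i,t^\star}=\mathbf g^\star$ used in Definition~\ref{def:cancellation} and in Appendix~\ref{app:nonzero-grad}. The conditional expectation then splits as
\[
\mathbb{E}\!\left[\tfrac1G\textstyle\sum_i \widehat A_i W_{i,t}\,\mathbf g_{i,t}\,\middle|\,y_{i,t}\in\mathcal C\right]
=\mathbb{E}\!\left[\widehat A_i W_{i,t}\,\middle|\,y_{i,t}\in\mathcal C\right]\mathbf g_{\mathcal C}+\mathbf r,\qquad \|\mathbf r\|\le\delta .
\]
Here exchangeability of the $G$ within-group indices under the sampling law lets me replace the group average by a single generic index $i$.

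The second step is the covariance identity
\[
\mathbb{E}[\widehat A_i W_{i,t}\mid\mathcal C]=\mathrm{Cov}(\widehat A_i,W_{i,t}\mid\mathcal C)+\mathbb{E}[\widehat A_i\mid\mathcal C]\,\mathbb{E}[W_{i,t}\mid\mathcal C].
\]
The weak-correlation hypothesis $\mathbb{E}[\widehat A\mid y\in\mathcal C]\approx 0$ makes the second term small, of size $\epsilon\cdot\mathbb{E}[W\mid\mathcal C]$, where $W$ is bounded because of clipping. The first term is nonzero by assumption. So the scalar coefficient of $\mathbf g_{\mathcal C}$ has magnitude at least $|\mathrm{Cov}|-\epsilon\,\mathbb{E}[W\mid\mathcal C]$, and the full vector has norm at least
\[
\bigl(|\mathrm{Cov}|-\epsilon\,\mathbb{E}[W\mid\mathcal C]\bigr)\|\mathbf g_{\mathcal C}\|-\delta .
\]
This bound is strictly positive whenever the two approximations are quantitatively smaller than the covariance signal.

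As a contrast, I would remark that under exchangeability (Definition~\ref{def:exchangeability}) $W_{i,t}$ does not depend on $i$. Then the covariance vanishes, and with the zero-mean advantages the leading term is zero, which recovers cancellation. Finally, I would connect to Proposition~\ref{prop:violate_cancel_implies_drift}. A nonzero expected update $g$ along the bucket direction, with $g^\top F_\theta(h)g>0$, gives per-step KL drift of order $\tfrac12\eta^2 g^\top F g$. Because the mean is nonzero, these steps add coherently rather than averaging out, which gives the accumulated learning tax.

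The main obstacle is making the two ``approximately'' hypotheses precise enough to turn $\neq 0$ into a genuine strict inequality. Concretely, I must control the residual $\mathbf r$ and the $\epsilon$ term against $|\mathrm{Cov}|\,\|\mathbf g_{\mathcal C}\|$. My first attempt is to state explicit quantitative versions: $|\mathbb{E}[\widehat A\mid\mathcal C]|\le\epsilon$, $\|\mathbb{E}[\widehat A_i W_{i,t}\boldsymbol\xi_{i,t}\mid\mathcal C]\|\le\delta$, and $|\mathrm{Cov}|\,\|\mathbf g_{\mathcal C}\|>\epsilon\,\mathbb{E}[W\mid\mathcal C]\,\|\mathbf g_{\mathcal C}\|+\delta$. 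The corollary then follows as the displayed lower bound.
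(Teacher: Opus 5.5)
Your argument is correct and follows the same route the paper intends: factor out the common bucket direction, then use $\mathbb{E}[\widehat A_i W_{i,t}\mid\mathcal C]=\mathrm{Cov}(\widehat A_i,W_{i,t}\mid\mathcal C)+\mathbb{E}[\widehat A_i\mid\mathcal C]\,\mathbb{E}[W_{i,t}\mid\mathcal C]$, so that the nonzero covariance survives the approximately zero-mean advantage. The paper gives this only as an informal one-line summary, and your explicit $\epsilon$/$\delta$ margin condition is exactly what turns ``approximately'' into a strict $\neq\mathbf 0$. One small correction: you only need $\mathbb{E}[W_{i,t}\mid\mathcal C]<\infty$, not boundedness, because sign-aware clipping leaves $W$ unbounded above for negative-advantage samples.
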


\paragraph{One-sentence summary.}
This corollary captures the typical practical setting where contexts rarely match exactly. Exact cancellation is rare, but if shared-token gradient terms are exchangeable within the group, so that the effective modulation $W_{i,t}$ is not systematically correlated with $\widehat{A}$, statistical averaging can approximately cancel them and keep the learning tax small. Once exchangeability fails and $W_{i,t}$ correlates with $\widehat{A}$, generic tokens exhibit persistent drift.

\section{Symmetric clipping: fixing sign asymmetry in GRPO clipping and restoring within-group exchangeability}
\label{app:sym-grpo-clip}

This appendix provides a \textbf{symmetric clipping} scheme that fixes the \textbf{sign asymmetry} of the GRPO-style clipped surrogate objective discussed in Appendix~\ref{app:grpo-clip}. Other GRPO variants such as DAPO, DCPO, and SSPO can be adjusted similarly. The standard GRPO objective triggers different $\min/\max$ branches for $A>0$ and $A<0$. Consequently, even when all trajectories in a group have the same token ratio $r$, their \emph{effective coefficients} can differ, breaking within-group cancellation of the shared token.

\vspace{0.25em}
\paragraph{Review: asymmetry comes from coupling "$\min$" with the sign of $A$.}
For scalar ratio $r$, advantage $A$, and clipped ratio $\bar r = \mathrm{clip}(r, 1-\varepsilon, 1+\varepsilon)$, the standard surrogate objective is
\begin{equation}
	\mathcal{L}_{\mathrm{ppo}}(r,A)
	\triangleq \min(rA,\ \bar r A).
	\label{eq:ppo_surrogate_scalar}
\end{equation}
Its equivalent piecewise form, derived in Appendix~\ref{app:grpo-clip}, is
\begin{equation}
	\min(rA, \bar r A)
	=
	\begin{cases}
		A \cdot \min(r, \bar r), & A \ge 0, \\
		A \cdot \max(r, \bar r), & A < 0,
	\end{cases}
	\label{eq:ppo_sign_aware_equiv}
\end{equation}
which explicitly depends on the sign of $A$. Therefore, when positive and negative advantages coexist within a group, identical $r$ values can enter different branches for different trajectories, breaking exchangeability and cancellation.

\vspace{0.5em}
\subsection{Symmetric trust-region clipping: decoupling the effective weight from the sign of the advantage}

To restore exchangeability of shared tokens, we rewrite the clipped surrogate as a \textbf{sign-independent} hard trust-region form:
\begin{equation}
	\mathcal{L}_{\mathrm{sym}}(r,A)
	\triangleq A \cdot \phi(r),
	\qquad
	\phi(r) \triangleq \mathrm{clip}(r, 1-\varepsilon, 1 + \varepsilon).
	\label{eq:sym_clip_scalar}
\end{equation}
That is, \textbf{the same two-sided clipping operator $\phi(r)$ is applied to all samples, regardless of the sign of $A$}. In a token-decomposed GRPO/PPOLike objective, we replace each token-level quantity $r_{i,t}$ by $\phi(r_{i,t})$.

\paragraph{A symmetric-clipping version of GRPO-fix (token-decomposed).}
Let the within-group advantages satisfy the zero-mean constraint $\sum_{i=1}^G \widehat{A}_i = 0$, and define the token-level ratio
$r_{i,t}(\theta) = \frac{\pi_\theta(y_{i,t} \mid h^{(i)}_t)}{\pi_{\theta_{\mathrm{old}}}(y_{i,t} \mid h^{(i)}_t)}$.
The symmetric-clipping GRPO objective is
\begin{equation}
	\mathcal{J}_{\mathrm{GRPO\text{-}SymClip}}(\theta)
	=
	\mathbb{E}\Bigg[
	\frac{1}{G}\sum_{i=1}^{G}\sum_{t=1}^{T_i}
	\phi\!\big(r_{i,t}(\theta)\big)\ \widehat{A}_i
	\Bigg],
	\qquad
	\phi(r) = \mathrm{clip}(r, 1-\varepsilon, 1 + \varepsilon).
	\label{eq:grpo_symclip_obj}
\end{equation}
Ignoring subgradient details at clipping breakpoints, its gradient is
\begin{equation}
	\nabla_\theta \mathcal{J}_{\mathrm{GRPO\text{-}SymClip}}(\theta)
	=
	\mathbb{E}\Bigg[
	\frac{1}{G}\sum_{i=1}^{G}\sum_{t=1}^{T_i}
	\widehat{A}_i\ \phi'\!\big(r_{i,t}(\theta)\big)\ r_{i,t}(\theta)
	\nabla_\theta \log \pi_\theta\!\big(y_{i,t} \mid h^{(i)}_t\big)
	\Bigg],
	\label{eq:grpo_symclip_grad}
\end{equation}
where
\[
\phi'(r)=
\begin{cases}
0, & r<1-\varepsilon,\\
1, & 1-\varepsilon<r<1+\varepsilon,\\
0, & r>1+\varepsilon.
\end{cases}
\]
At breakpoints, one may use a subgradient or ignore the measure-zero set. Thus, when the ratio lies outside $[1-\varepsilon,1+\varepsilon]$, the true gradient of this clipped surrogate is zero.

\vspace{0.5em}
\subsection{How symmetric clipping repairs shared-token cancellation}

\begin{corollary}[Symmetric clipping restores within-group cancellation of shared tokens in a minimal structural setting]
	\label{cor:symclip_restore_cancel}
	Fix an input $x$ and a time step $t^\star$. Consider the event $\mathcal{E}_{t^\star}$ in which all trajectories in the group share the same context-token pair $(h^\star,a^\star)$ at this step. Thus, for all $i$, $r_{i,t^\star}(\theta)=r^\star(\theta)$, and the corresponding score function $\nabla_\theta \log \pi_\theta(a^\star \mid h^\star)$ is identical within the group. If the within-group advantages satisfy $\sum_{i=1}^G \widehat{A}_i = 0$, then under the symmetric clipping objective \eqref{eq:grpo_symclip_obj}, the aggregated gradient of this shared token cancels exactly:
	\begin{equation}
		\sum_{i=1}^G \widehat{A}_i\ \phi'\!\big(r_{i,t^\star}(\theta)\big)\ r_{i,t^\star}(\theta)
		\nabla_\theta \log \pi_\theta(a^\star \mid h^\star)
		=
		\phi'\!\big(r^\star(\theta)\big)r^\star(\theta)
		\Big(\sum_{i=1}^G \widehat{A}_i\Big)
		\nabla_\theta \log \pi_\theta(a^\star \mid h^\star)
		= \mathbf{0}.
		\label{eq:symclip_cancel}
	\end{equation}
\end{corollary}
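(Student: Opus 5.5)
The argument is a direct computation that isolates the shared-token terms in the gradient \eqref{eq:grpo_symclip_grad} and factors out everything that does not depend on the trajectory index $i$.

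\textbf{Step 1: extract the shared-token terms.} Under the event $\mathcal{E}_{t^\star}$, the $t^\star$ summand of trajectory $i$ in \eqref{eq:grpo_symclip_grad} is $\widehat{A}_i\,\phi'(r_{i,t^\star}(\theta))\,r_{i,t^\star}(\theta)\,\nabla_\theta\log\pi_\theta(a^\star\mid h^\star)$. The ratio $r_{i,t^\star}(\theta)=\pi_\theta(a^\star\mid h^\star)/\pi_{\theta_{\mathrm{old}}}(a^\star\mid h^\star)$ depends only on the pair $(h^\star,a^\star)$, which by Definition~\ref{def:shared-event} is the same for all $i$. Hence $r_{i,t^\star}(\theta)=r^\star(\theta)$, and the score vector is a common $\mathbf g^\star(\theta)$.

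\textbf{Step 2: use sign-independence of the clip.} The key observation is that $\phi(r)=\mathrm{clip}(r,1-\varepsilon,1+\varepsilon)$ in \eqref{eq:sym_clip_scalar} is applied identically to every sample, with no dependence on $\widehat{A}_i$. This is exactly what fails for the sign-aware form \eqref{eq:ppo_sign_aware_equiv} analyzed in Corollary~\ref{cor:clip-break-cancel}. Consequently $\phi'(r_{i,t^\star})=\phi'(r^\star)$ for all $i$, whether $r^\star$ lies inside the trust region (slope $1$) or outside it (slope $0$). The scalar $\phi'(r^\star(\theta))\,r^\star(\theta)$ therefore plays the role of the common $\bar W_{t^\star}(\theta)$ in Definition~\ref{def:exchangeability}, so exchangeability holds.

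\textbf{Step 3: conclude cancellation.} Factor out the common scalar and the common vector:
\[
\sum_i \widehat{A}_i\,\phi'(r^\star)\,r^\star\,\mathbf g^\star=\phi'(r^\star)\,r^\star\Big(\sum_i\widehat{A}_i\Big)\mathbf g^\star .
\]
The zero-mean constraint $\sum_i\widehat{A}_i=0$ then gives $\mathbf 0$, which is \eqref{eq:symclip_cancel}. Equivalently, this is the implication stated in Definition~\ref{def:exchangeability}.

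\textbf{Main obstacle.} The only delicate point is the breakpoints $r^\star=1\pm\varepsilon$, where $\phi'$ is undefined. I would handle it by taking any fixed subgradient selection that depends on $r$ alone, not on $i$ or on $\widehat{A}_i$. The coefficient then remains common across the group and the factorization still goes through. Alternatively, the breakpoints can be excluded as a measure-zero set, consistent with the convention stated after \eqref{eq:grpo_symclip_grad}.
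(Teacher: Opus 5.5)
Your proposal is correct and follows essentially the same route as the paper: the paper's justification is just the factorization in \eqref{eq:symclip_cancel}, resting on the observation that $\phi$ and $\phi'$ do not depend on the sign of $\widehat{A}_i$, so the shared-token modulation $\phi'(r^\star)r^\star$ is common across the group and the zero-mean constraint annihilates the sum. Your handling of the breakpoints, via a subgradient selection that depends on $r$ alone and not on $i$, is a small but useful tightening of the paper's convention of ignoring the measure-zero set.
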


\paragraph{Explanation.}
The key is that $\phi(\cdot)$ and its derivative no longer depend on the sign of $\widehat{A}_i$. Thus, the \emph{gradient modulation} of a shared token remains exchangeable within the group, allowing exact cancellation induced by zero-mean advantages. This precisely fixes the source of exchangeability violation in the standard clipped surrogate objective \eqref{eq:ppo_sign_aware_equiv}.

\vspace{0.5em}
\subsection{Relationship to the standard GRPO surrogate and its cost}

\paragraph{Difference.}
Symmetric clipping in \eqref{eq:sym_clip_scalar} is a sign-independent hard trust-region comparison baseline, not an exact replica of the sign-sensitive one-sided correction mechanism in standard PPO/GRPO. Its design priority is as follows: when $r$ lies outside $[1-\varepsilon,1+\varepsilon]$, samples from the old policy are considered overly off-policy relative to the current policy, and the reliability of the importance weight is reduced. Therefore, the gradient contribution of that sample is set to zero within the current batch. This freezing is not permanent. In PPO/GRPO-style training, the old policy is periodically synchronized as $\theta_{\mathrm{old}}\leftarrow\theta$; after synchronization and resampling, the ratio returns to a region close to $1$, and the sample re-enters the update interval. The main cost of SymClip is therefore more conservative updates, lower sample utilization within a single batch, and potentially slower convergence. Its main benefits are preserving the complete trust region, avoiding domination by invalid IS weights, and maintaining sign-independent shared-token cancellation.

\paragraph{Engineering implementation: recommended minimal change.}
If the current implementation uses a token-decomposed GRPO/PPOLike form, each token's effective weight only needs to be changed from
\[
\min(r_{i,t} \widehat{A}_i, \ \mathrm{clip}(r_{i,t}) \widehat{A}_i)
\]
to
\[
\widehat{A}_i \cdot \mathrm{clip}(r_{i,t}, 1-\varepsilon, 1 + \varepsilon),
\]
which implements the symmetric clipping objective in \eqref{eq:grpo_symclip_obj}.

\section{Implementation details}
\label{app:impl-details}

\paragraph{Models and context lengths.}
We configure \textbf{Qwen3-32B} with a \textbf{32k}-token context length and \textbf{Qwen3-Next-80B-A3B-Thinking} with a \textbf{256k}-token context length. Inference uses the \textbf{vLLM} engine, version \textbf{0.11.2}.

\paragraph{Hardware.}
Experiments are conducted on \textbf{32} \textbf{NVIDIA A800 (80GB)} GPUs.

\paragraph{Optimization hyperparameters.}
Training hyperparameters are as follows:
\begin{itemize}
	\item Initial learning rate: $8 \times 10^{-7}$;
	\item Learning-rate schedule: cosine decay, with the minimum learning-rate ratio set to $0.2$;
	\item Warmup: linear warmup covering $3\%$ of total training steps;
	\item Entropy regularization coefficient: $\beta=0$;
	\item Number of rollouts: 32 trajectories sampled per input;
	\item Mini-batch size: 32.
\end{itemize}

\section{Additional experimental settings}
\label{app:inference-settings}

\paragraph{Inference settings.}
For Qwen3-32B, the decoding parameters are Temperature=$0.6$, TopP=$0.95$, TopK=$20$, and MinP=$0$. For Qwen3-Next-80B-A3B-Thinking, the decoding parameters are Temperature=$0.6$, TopP=$0.95$, TopK=$20$, and MinP=$0$. GPT-OSS-120B follows OpenAI's official recommendation with Temperature=$1.0$ and TopP=$1.0$. GPT-OSS-120B evaluation uses the harmony response format, and the system prompt sets reasoning effort to high (``Reasoning: high''). All methods for the same model are compared under identical decoding settings.

\section{Bias of Min-Replace and failure of importance-sampling unbiasedness: effects, bounds, and testable predictions}
\label{sec:minreplace_no_reverse}

This appendix gives a complete implementation-level and theoretical answer to a key question:

\begin{quote}
	\textbf{Does applying Min-Replace within a group, namely taking the minimum and broadcasting it, break the unbiasedness of importance sampling (IS)? What are its effects? Does it cause direction errors or reverse updates? When can its bias be ignored?}
\end{quote}

The conclusions are as follows, under the default implementation assumption in this paper: \textbf{stop-gradient is applied to within-group transformation coefficients}.

\begin{enumerate}
	\item \textbf{IS unbiasedness is indeed broken:} The estimator produced by Min-Replace is no longer an unbiased gradient of the original sequence-coupled IS objective. It optimizes a more \emph{conservative} surrogate objective.
	\item \textbf{There is no reverse update, or sign flip:} Under stop-gradient, Min-Replace only applies a \emph{proportional shrinkage} to each trajectory's effective modulation coefficient at shared positions. It does not decrease a probability that should be increased; the update sign determined by $\widehat{A}_i$ is preserved.
	\item \textbf{The main effect is a bias-variance trade-off:} Min-Replace strongly suppresses the dominance of large-ratio or tail samples within the group, substantially reducing variance, reducing excessive clipping/KL-constraint triggers, and improving stability. Its cost is bias and a smaller effective step size, which may slow convergence and, in extreme cases, shift the optimum toward the old policy.
	\item \textbf{When the bias is negligible:} If training is constrained to a small trust region such that the effective modulation coefficients $\bar W_{i,t^\star}(\theta)$ at shared positions have low relative dispersion within the group, then the bias upper bound vanishes with that dispersion. Min-Replace can then be viewed approximately as robust variance reduction or an implicit trust region.
\end{enumerate}

\vspace{0.5em}
\subsection{Setup: sequence-coupled effective modulation and Min-Replace, with DFPO-based GSPO as an example}

Recall Appendix~\ref{sec:gvpo}. We first rewrote the GSPO clipped surrogate objective exactly as "first apply sign-aware clipping to the weight, then multiply by the advantage":
\begin{equation}
	\mathcal{J}_\text{GSPO}(\theta)
	=
	\mathbb{E}\!\left[
	\frac{1}{G}\sum_{i=1}^G
	\widehat{A}_i\,\bar{s}_i(\theta)
	\right],
	\label{eq:minrep_gspo_equiv}
\end{equation}
where $\bar{s}_i(\theta)$ is the post-clipping weight defined in \eqref{eq:postclip_bar_s_piecewise}, and by definition $\bar{s}_i(\theta)>0$.

In DFPO, Min-Replace acts on the post-clipping effective modulation at shared positions, $\bar{\mathbf W}_{t^\star}=\boldsymbol{\alpha}_{t^\star}\odot\bar{\mathbf{s}}$, as follows:
\begin{equation}
	\bar W_{\min,t^\star}(\theta)\triangleq \min_{j\in\{1,\dots,G\}}\bar W_{j,t^\star}(\theta),
	\qquad
	\widetilde W_{i,t^\star}(\theta)\triangleq \bar W_{\min,t^\star}(\theta)\ \ \forall i,
	\label{eq:minrep_def_on_bar}
\end{equation}
and stop-gradient is applied to $\widetilde W_{i,t^\star}(\theta)$ during backpropagation. That is, $\widetilde W_{i,t^\star}$ is treated as a constant coefficient within the group, and the transformation operator itself is not differentiated; see Appendix~\ref{app:nonzero-grad}. If an implementation needs to map back to sequence-level weights and $\alpha_{i,t^\star}>0$, one can set $\widetilde{s}_i=\widetilde W_{i,t^\star}/\alpha_{i,t^\star}$. The bias formulas below are written for position-wise application. If only a single shared position is handled, set $t=t^\star$.

\vspace{0.5em}
\subsection{Gradient form: Min-Replace does not reverse updates, it only applies conservative shrinkage}

Consider a linear segment and ignore subgradient details at breakpoints, which do not affect the sign conclusion. Under stop-gradient, the DFPO gradient can be written as, structurally matching \eqref{eq:gvpo_like_grad},
\begin{equation}
	\nabla_{\theta} \widetilde{\mathcal{J}} (\theta)
	=
	\mathbb{E}\!\left[
	\frac{1}{G} \sum_{i=1}^{G}\sum_{t=1}^{|y_i|}
	\widehat{A}_{i}\,\widetilde W_{i,t}\,
	\nabla_{\theta} \log \pi_{\theta} (y_{i,t} \mid x, y_{i,<t})
	\right],
	\label{eq:minrep_grad_form}
\end{equation}
where for length-averaged GSPO, $\widetilde W_{i,t}=\alpha_{i,t}\widetilde{s}_i$ and $\alpha_{i,t}=1/|y_i|$.

The baseline post-clipping objective \eqref{eq:minrep_gspo_equiv}, without Min-Replace, has the following gradient under the stop-gradient assumption:
\begin{equation}
	\nabla_{\theta} {\mathcal{J}} (\theta)
	=
	\mathbb{E}\!\left[
	\frac{1}{G} \sum_{i=1}^{G}\sum_{t=1}^{|y_i|}
	\widehat{A}_{i}\,\bar W_{i,t}(\theta)\,
	\nabla_{\theta} \log \pi_{\theta} (y_{i,t} \mid x, y_{i,<t})
	\right].
	\label{eq:minrep_grad_base_form}
\end{equation}

For any transformed position $t$, we have $\widetilde W_{i,t}(\theta) = \bar W_{\min,t}(\theta) \leq \bar W_{i,t}(\theta)$, and both quantities are positive. We define a position-wise shrinkage ratio:
\begin{equation}
	\phi_{i,t}(\theta)\triangleq \frac{\widetilde W_{i,t}(\theta)}{\bar W_{i,t}(\theta)}
	=
	\frac{\bar W_{\min,t}(\theta)}{\bar W_{i,t}(\theta)}
	\in (0,1].
	\label{eq:minrep_phi_def}
\end{equation}
Thus, for any trajectory $i$, its effective modulation coefficient satisfies
\begin{equation}
	\widehat{A}_i\,\widetilde W_{i,t}(\theta)
	=
	\phi_{i,t}(\theta)\cdot
	\widehat{A}_i\,\bar W_{i,t}(\theta),
	\qquad \phi_{i,t}(\theta) \in (0,1].
	\label{eq:minrep_coef_scale}
\end{equation}

\begin{theorem}[No reverse update: Min-Replace only applies proportional shrinkage]
	\label{thm:no_reverse_update}
	Under the stop-gradient assumption, for any $i$ and any token $t$, the coefficient of the trajectory's score-function term $\nabla_{\theta}\log\pi_\theta(y_{i,t}\mid x,y_{i,<t})$ after Min-Replace has the same sign as the baseline:
	\begin{equation}
		\mathrm{sign}\!\left(\widehat{A}_i\,\widetilde W_{i,t}(\theta)\right)
		=
		\mathrm{sign}\!\left(\widehat{A}_i\,\bar W_{i,t}(\theta)\right)
		=
		\mathrm{sign}\!\left(\widehat{A}_i\right).
		\label{eq:minrep_sign_preserve}
	\end{equation}
	Therefore, Min-Replace does not cause a reverse update, meaning that the sign of the update determined by the individual trajectory advantage is not flipped. Its effect is equivalent to applying a more conservative effective step size to trajectories whose effective modulation is not minimal.
\end{theorem}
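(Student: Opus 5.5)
The plan is a direct sign argument built on the factorization \eqref{eq:minrep_coef_scale}. Everything reduces to one fact: the shrinkage ratio $\phi_{i,t}(\theta)$ from \eqref{eq:minrep_phi_def} is strictly positive. Once that holds, multiplying $\widehat{A}_i\,\bar W_{i,t}(\theta)$ by $\phi_{i,t}$ cannot change its sign. I will first treat transformed (shared) positions, then untransformed positions, and finally handle the case $\widehat{A}_i=0$.

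\textbf{Step 1: positivity of the baseline modulation.} I show that $\bar W_{i,t}(\theta)>0$ for every $i$ and $t$. By \eqref{eq:def_si_ci}, $s_i(\theta)$ is an exponential, so $s_i>0$. Its clipped value satisfies $c_i\in[1-\varepsilon,1+\varepsilon]$, which is positive for $\varepsilon<1$. The sign-aware post-clipping weight $\bar s_i$ in \eqref{eq:postclip_bar_s_piecewise} is either $\min(s_i,c_i)$ or $\max(s_i,c_i)$, so in both branches it is positive. The aggregation coefficient satisfies $\alpha_{i,t}>0$; for length-averaged GSPO it equals $1/|y_i|$. This step needs strict positivity, not just $\alpha_{i,t}\ge 0$. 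If some $\alpha_{i,t}=0$, then both coefficients vanish and the sign claim holds trivially with sign $0$, so I will record that degenerate case separately. Together these give $\bar W_{i,t}=\alpha_{i,t}\bar s_i>0$.

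\textbf{Step 2: positivity of the transformed modulation and of $\phi$.} At a transformed position, $\widetilde W_{i,t}=\bar W_{\min,t}=\min_j \bar W_{j,t}$. This is a minimum of finitely many positive numbers, so it is positive and no larger than $\bar W_{i,t}$. Therefore $\phi_{i,t}\in(0,1]$, which confirms \eqref{eq:minrep_phi_def}.

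At a position where Min-Replace is not applied, I take $\widetilde W_{i,t}=\bar W_{i,t}$. Then $\phi_{i,t}=1$ and the claim is immediate.

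Stop-gradient matters here. Because $\widetilde W_{i,t}$ is treated as a constant, the coefficient multiplying the score $\nabla_\theta\log\pi_\theta(y_{i,t}\mid\cdot)$ in \eqref{eq:minrep_grad_form} is exactly $\widehat{A}_i\widetilde W_{i,t}$. Differentiating through the $\min$ produces no extra terms that could carry a different sign. I expect this step to be the main conceptual obstacle. Without stop-gradient, the derivative of $\min_j\bar W_{j,t}$ would inject the score of the argmin trajectory into every trajectory's update, and the sign claim would fail. So the argument must state explicitly that it identifies the per-trajectory score coefficient under the convention of Appendix~\ref{app:nonzero-grad}.

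\textbf{Step 3: conclude.} Since $\phi_{i,t}>0$ and $\bar W_{i,t}>0$,
\[
\mathrm{sign}(\widehat{A}_i\widetilde W_{i,t})=\mathrm{sign}(\phi_{i,t})\,\mathrm{sign}(\widehat{A}_i\bar W_{i,t})=\mathrm{sign}(\widehat{A}_i),
\]
which is \eqref{eq:minrep_sign_preserve}. When $\widehat{A}_i=0$, all three quantities are $0$ and the identity still holds. The interpretation as a more conservative step size follows from $\phi_{i,t}\le 1$, with equality exactly when $\bar W_{i,t}$ attains the group minimum. Consequently, trajectories whose modulation is not minimal have their coefficient scaled by a factor strictly less than one.
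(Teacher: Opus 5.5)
Your proposal is correct and follows the paper's argument: positivity of $\bar W_{i,t}$ and of $\widetilde W_{i,t}=\bar W_{\min,t}$ gives $\phi_{i,t}\in(0,1]$, so the factorization $\widehat A_i\widetilde W_{i,t}=\phi_{i,t}\,\widehat A_i\bar W_{i,t}$ preserves the sign. Your extra bookkeeping (untransformed positions, the case $\widehat A_i=0$, the role of stop-gradient) fills in details the paper leaves implicit, but one aside needs correcting: the case $\alpha_{j,t}=0$ is not trivially fine, because then $\bar W_{\min,t}=0$ zeroes every trajectory's coefficient at that position while $\mathrm{sign}(\widehat A_i)\neq 0$, so strict positivity of all $\alpha_{j,t}$ (true for $\alpha_{j,t}=1/|y_j|$) is a genuine hypothesis, not a removable edge case.
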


\paragraph{Proof, simplified.}
Since $\bar W_{i,t}(\theta)>0$ and $\widetilde W_{i,t}(\theta)>0$, Eq.~\eqref{eq:minrep_coef_scale} shows that $\widehat{A}_i\,\widetilde W_{i,t}(\theta)$ is a positive scalar multiple of $\widehat{A}_i\,\bar W_{i,t}(\theta)$. The sign is therefore preserved. \qed

\vspace{0.5em}
\subsection{Where the bias comes from: Min-Replace breaks IS unbiasedness and changes the optimization objective}

Although Min-Replace does not create reverse updates, it \textbf{does introduce bias}: it no longer corresponds to the Radon-Nikodym derivative form of the original IS weights, and therefore generally does not satisfy unbiased transport to the target distribution.

This is easiest to see from the perspective of a stop-gradient gradient estimator. Let $\mathbf g_{i,t}(\theta)=\nabla_{\theta} \log \pi_{\theta} (y_{i,t} \mid x, y_{i,<t})$. The single-step gradient estimators of the baseline and Min-Replace are
\begin{equation}
	\widehat{g}_{\mathrm{base}}
	=
	\frac{1}{G}\sum_{i=1}^{G}\sum_{t=1}^{|y_i|}
	\widehat{A}_i\,\bar W_{i,t}(\theta)\,\mathbf g_{i,t}(\theta),
	\qquad
	\widehat{g}_{\mathrm{min}}
	=
	\frac{1}{G}\sum_{i=1}^{G}\sum_{t=1}^{|y_i|}
	\widehat{A}_i\,\bar W_{\min,t}(\theta)\,\mathbf g_{i,t}(\theta).
	\label{eq:minrep_grad_estimators}
\end{equation}
The difference between their expectations, or the bias vector, is
\begin{equation}
	\mathrm{Bias}(\theta)
	\triangleq
	\mathbb{E}\!\left[\widehat{g}_{\mathrm{min}}\right]
	-
	\mathbb{E}\!\left[\widehat{g}_{\mathrm{base}}\right]
	=
	\mathbb{E}\!\left[
	\frac{1}{G}\sum_{i=1}^{G}
	\sum_{t=1}^{|y_i|}
	\widehat{A}_i\big(\bar W_{\min,t}(\theta)-\bar W_{i,t}(\theta)\big)\mathbf g_{i,t}(\theta)
	\right].
	\label{eq:minrep_bias_def}
\end{equation}
Because $\bar W_{\min,t} - \bar W_{i,t} \leq 0$, this bias is generally nonzero, meaning that \textbf{Min-Replace optimizes a more conservative surrogate objective closer to the old policy, not the original IS objective.}

We can also give an upper bound requiring no additional assumptions, showing how the bias increases with the within-group dispersion of effective modulation:
\begin{equation}
	\big\|\mathrm{Bias}(\theta)\big\|
	\le
	\mathbb{E}\!\left[
	\frac{1}{G}\sum_{i=1}^{G}
	\sum_{t=1}^{|y_i|}
	|\widehat{A}_i|\,
	\big(\bar W_{i,t}(\theta)-\bar W_{\min,t}(\theta)\big)\,
	\big\|\mathbf g_{i,t}(\theta)\big\|
	\right].
	\label{eq:minrep_bias_upper}
\end{equation}
Thus, the more dispersed the $\bar W_{i,t}$ values are within the group, especially with long-tailed large ratios or length-scaling differences, the larger the Min-Replace bias becomes. When the values are close, as in a trust region, the bias is small.

\vspace{0.5em}
\subsection{Bias-variance trade-off: why Min-Replace is usually more stable and less likely to trigger clipping feedback}

The direct structural effect of Min-Replace is that it changes the within-group modulation coefficients at shared positions from $\{\widehat{A}_i\bar W_{i,t^\star}\}$ to $\{\widehat{A}_i\bar W_{\min,t^\star}\}$, removing the random modulation caused by within-group dispersion of $\bar W_{i,t^\star}$. This brings two typical benefits.

\paragraph{(1) Within-group asymmetric modulation is weakened, matching the mechanism metric in the main text.}
The main text defines
\begin{equation}
	\mathrm{Asym}(t)
	=
	\mathrm{Var}_{i\in\{1,\dots,G\}}
	\!\left(
	W_{i,t}(\theta)\,\widehat{A}_i
	\right)
	\label{eq:minrep_asym_recall}
\end{equation}
to measure the strength of the "hard-to-cancel" effect for shared or similar tokens. Under Min-Replace, if the modulation factor at a shared position is viewed as $\widehat{A}_i W_{i,t^\star}$, then $W_{i,t^\star}$ is forced to be constant within the group, namely $\bar W_{\min,t^\star}$. This substantially reduces variance, making it easier to restore, or approximate, cancellation in the shared/high-frequency token subspace and thereby lowering the learning tax.

\paragraph{(2) Dominance by tail ratios is suppressed, and clipping/KL constraints are triggered less often with fewer false alarms.}
In PPO/GSPO-style objectives, large-ratio samples often push the surrogate objective into the clipping region, frequently triggering constraints and forming a negative feedback loop in which more updates cause more clipping and more clipping leaves less signal. Min-Replace forces the effective modulation at shared positions to be the smallest effective modulation in the group, applying stronger shrinkage to large-ratio samples. This usually reduces excessive clipping/KL-constraint triggers and improves stability.

The cost is the bias shown in \eqref{eq:minrep_bias_def}: the update becomes more conservative, the effective step size is smaller, convergence may slow, and the optimum may shift toward the old policy.

\vspace{0.5em}
\subsection{When the bias can be ignored: a sufficient trust-region condition}

When training remains in a small trust region, so that post-clipping effective modulations at shared positions are close within the group, the bias is controlled. For example, suppose there exists $\delta>0$ such that all trajectories in the same group satisfy, at the shared position $t^\star$,
\begin{equation}
	0\le
	\log \bar W_{i,t^\star}(\theta)
	-
	\log \bar W_{\min,t^\star}(\theta)
	\le \delta.
	\label{eq:minrep_trust_region_log}
\end{equation}
Then $\bar W_{i,t^\star}(\theta)/\bar W_{\min,t^\star}(\theta)\in[1,e^\delta]$, and
\begin{equation}
	0\le
	\bar W_{i,t^\star}(\theta)-\bar W_{\min,t^\star}(\theta)
	\le
	(e^\delta-1)\bar W_{\min,t^\star}(\theta),
	\qquad
	\phi_{i,t^\star}(\theta)=
	\frac{\bar W_{\min,t^\star}}{\bar W_{i,t^\star}}
	\in[e^{-\delta},1].
	\label{eq:minrep_trust_region_bounds}
\end{equation}
Substituting this into the bias upper bound \eqref{eq:minrep_bias_upper} shows that when $\delta$ is sufficiently small, meaning the relative dispersion of effective modulation is tightly constrained, the upper bound on the Min-Replace bias tends to zero as $\delta\to 0$. In this case, the main effect of Min-Replace is well approximated by variance reduction plus an implicitly smaller step size or stronger trust region, rather than by a severe rewrite of the objective.

\vspace{0.5em}
\subsection{Implications for the ablations and experiments: which effects reflect bias and which reflect structural repair?}

Combining the above with the ablation results in Table~\ref{tab:ablation}:
\begin{itemize}
	\item \textbf{Removing stop-gradient, DFPO no stop-grad, degrades performance:} This is not caused by IS bias itself. Rather, allowing the group transformation to participate in backpropagation introduces additional gradient coupling and instability, breaking the implementation assumption required to treat the transformation as a within-group control variable, and thereby introducing new exchangeability violations.
	\item \textbf{Replacing within-group normalization by global scaling, DFPO scale by 0.5, remains substantially worse:} This shows that the benefit is not merely due to a smaller effective step size or more conservative updates, which would be the bias-level explanation. The key is that Min-Replace removes the \emph{within-group} dispersion of effective modulation, reduces $\mathrm{Asym}$, restores or approximates cancellation in the shared/high-frequency token subspace, and thereby lowers the learning tax as a structural repair.
\end{itemize}

\vspace{0.25em}
\paragraph{Summary.}
Min-Replace breaks IS unbiasedness and introduces bias. Under the stop-gradient assumption, however, it does not cause reverse updates; instead, it applies proportional shrinkage to trajectories whose effective modulation is not minimal. Its core benefits are variance reduction, weaker within-group asymmetric modulation, and lower learning tax. Its cost is a more conservative surrogate objective and potentially slower convergence. These conclusions align precisely with the mechanism analysis of the exchangeability-cancellation necessary condition in this paper.


\newpage
\input{checklist.tex}

\end{document}

%% file: checklist.tex
\section*{NeurIPS Paper Checklist}

The checklist is designed to encourage best practices for responsible machine learning research, addressing issues of reproducibility, transparency, research ethics, and societal impact. Do not remove the checklist: {\bf The papers not including the checklist will be desk rejected.} The checklist should follow the references and follow the (optional) supplemental material.  The checklist does NOT count towards the page
limit. 

Please read the checklist guidelines carefully for information on how to answer these questions. For each question in the checklist:
\begin{itemize}
    \item You should answer \answerYes{}, \answerNo{}, or \answerNA{}.
    \item \answerNA{} means either that the question is Not Applicable for that particular paper or the relevant information is Not Available.
    \item Please provide a short (1--2 sentence) justification right after your answer (even for \answerNA). 
\end{itemize}

{\bf The checklist answers are an integral part of your paper submission.} They are visible to the reviewers, area chairs, senior area chairs, and ethics reviewers. You will also be asked to include it (after eventual revisions) with the final version of your paper, and its final version will be published with the paper.

The reviewers of your paper will be asked to use the checklist as one of the factors in their evaluation. While \answerYes{} is generally preferable to \answerNo{}, it is perfectly acceptable to answer \answerNo{} provided a proper justification is given (e.g., error bars are not reported because it would be too computationally expensive'' or ``we were unable to find the license for the dataset we used''). In general, answering \answerNo{} or \answerNA{} is not grounds for rejection. While the questions are phrased in a binary way, we acknowledge that the true answer is often more nuanced, so please just use your best judgment and write a justification to elaborate. All supporting evidence can appear either in the main paper or the supplemental material, provided in appendix. If you answer \answerYes{} to a question, in the justification please point to the section(s) where related material for the question can be found.

IMPORTANT, please:
\begin{itemize}
    \item {\bf Delete this instruction block, but keep the section heading ``NeurIPS Paper Checklist"},
    \item  {\bf Keep the checklist subsection headings, questions/answers and guidelines below.}
    \item {\bf Do not modify the questions and only use the provided macros for your answers}.
\end{itemize}


\begin{enumerate}

\item {\bf Claims}
    \item[] Question: Do the main claims made in the abstract and introduction accurately reflect the paper's contributions and scope?
    \item[] Answer: \answerYes{}
    \item[] Justification: The abstract and introduction clearly limit the paper's main claim to the necessary condition of token-level gradient exchangeability for intra-group learning under sparse terminal rewards, and summarize the theoretical analysis, structural repair, and empirical validation. The limitations section further states that this condition is necessary but not sufficient, avoiding overgeneralization beyond the theoretical and experimental settings studied in the paper.
    \item[] Guidelines:
    \begin{itemize}
        \item The answer \answerNA{} means that the abstract and introduction do not include the claims made in the paper.
        \item The abstract and/or introduction should clearly state the claims made, including the contributions made in the paper and important assumptions and limitations. A \answerNo{} or \answerNA{} answer to this question will not be perceived well by the reviewers. 
        \item The claims made should match theoretical and experimental results, and reflect how much the results can be expected to generalize to other settings. 
        \item It is fine to include aspirational goals as motivation as long as it is clear that these goals are not attained by the paper. 
    \end{itemize}

\item {\bf Limitations}
    \item[] Question: Does the paper discuss the limitations of the work performed by the authors?
    \item[] Answer: \answerYes{}
    \item[] Justification: The paper includes a dedicated limitations section, explaining that the result is a structural necessary condition rather than a sufficient condition, and discussing limitations such as unidentifiable credit assignment under terminal rewards, interactions between decomposable and coupled mechanisms, and potential bias introduced by projection-based transformations.
    \item[] Guidelines:
    \begin{itemize}
        \item The answer \answerNA{} means that the paper has no limitation while the answer \answerNo{} means that the paper has limitations, but those are not discussed in the paper. 
        \item The authors are encouraged to create a separate ``Limitations'' section in their paper.
        \item The paper should point out any strong assumptions and how robust the results are to violations of these assumptions (e.g., independence assumptions, noiseless settings, model well-specification, asymptotic approximations only holding locally). The authors should reflect on how these assumptions might be violated in practice and what the implications would be.
        \item The authors should reflect on the scope of the claims made, e.g., if the approach was only tested on a few datasets or with a few runs. In general, empirical results often depend on implicit assumptions, which should be articulated.
        \item The authors should reflect on the factors that influence the performance of the approach. For example, a facial recognition algorithm may perform poorly when image resolution is low or images are taken in low lighting. Or a speech-to-text system might not be used reliably to provide closed captions for online lectures because it fails to handle technical jargon.
        \item The authors should discuss the computational efficiency of the proposed algorithms and how they scale with dataset size.
        \item If applicable, the authors should discuss possible limitations of their approach to address problems of privacy and fairness.
        \item While the authors might fear that complete honesty about limitations might be used by reviewers as grounds for rejection, a worse outcome might be that reviewers discover limitations that aren't acknowledged in the paper. The authors should use their best judgment and recognize that individual actions in favor of transparency play an important role in developing norms that preserve the integrity of the community. Reviewers will be specifically instructed to not penalize honesty concerning limitations.
    \end{itemize}

\item {\bf Theory assumptions and proofs}
    \item[] Question: For each theoretical result, does the paper provide the full set of assumptions and a complete (and correct) proof?
    \item[] Answer: \answerYes{}
    \item[] Justification: The main text gives definitions, propositions, and corollaries, and explicitly states key assumptions such as shared context-token events, zero-mean group advantages, local linear regions, and stop-gradient treatment. The main proofs and supplementary derivations are provided in the appendix, with proof sketches and cross-references in the main text.
    \item[] Guidelines:
    \begin{itemize}
        \item The answer \answerNA{} means that the paper does not include theoretical results. 
        \item All the theorems, formulas, and proofs in the paper should be numbered and cross-referenced.
        \item All assumptions should be clearly stated or referenced in the statement of any theorems.
        \item The proofs can either appear in the main paper or the supplemental material, but if they appear in the supplemental material, the authors are encouraged to provide a short proof sketch to provide intuition. 
        \item Inversely, any informal proof provided in the core of the paper should be complemented by formal proofs provided in appendix or supplemental material.
        \item Theorems and Lemmas that the proof relies upon should be properly referenced. 
    \end{itemize}

    \item {\bf Experimental result reproducibility}
    \item[] Question: Does the paper fully disclose all the information needed to reproduce the main experimental results of the paper to the extent that it affects the main claims and/or conclusions of the paper (regardless of whether the code and data are provided or not)?
    \item[] Answer: \answerYes{}
    \item[] Justification: The experiments section and appendix specify the tasks, models, baselines, compute-matching protocol, training hyperparameters, inference settings, number of random seeds, and statistical testing procedure. The implementation will be shared through an anonymized public repository.
    \item[] Guidelines:
    \begin{itemize}
        \item The answer \answerNA{} means that the paper does not include experiments.
        \item If the paper includes experiments, a \answerNo{} answer to this question will not be perceived well by the reviewers: Making the paper reproducible is important, regardless of whether the code and data are provided or not.
        \item If the contribution is a dataset and\slash or model, the authors should describe the steps taken to make their results reproducible or verifiable. 
        \item Depending on the contribution, reproducibility can be accomplished in various ways. For example, if the contribution is a novel architecture, describing the architecture fully might suffice, or if the contribution is a specific model and empirical evaluation, it may be necessary to either make it possible for others to replicate the model with the same dataset, or provide access to the model. In general. releasing code and data is often one good way to accomplish this, but reproducibility can also be provided via detailed instructions for how to replicate the results, access to a hosted model (e.g., in the case of a large language model), releasing of a model checkpoint, or other means that are appropriate to the research performed.
        \item While NeurIPS does not require releasing code, the conference does require all submissions to provide some reasonable avenue for reproducibility, which may depend on the nature of the contribution. For example
        \begin{enumerate}
            \item If the contribution is primarily a new algorithm, the paper should make it clear how to reproduce that algorithm.
            \item If the contribution is primarily a new model architecture, the paper should describe the architecture clearly and fully.
            \item If the contribution is a new model (e.g., a large language model), then there should either be a way to access this model for reproducing the results or a way to reproduce the model (e.g., with an open-source dataset or instructions for how to construct the dataset).
            \item We recognize that reproducibility may be tricky in some cases, in which case authors are welcome to describe the particular way they provide for reproducibility. In the case of closed-source models, it may be that access to the model is limited in some way (e.g., to registered users), but it should be possible for other researchers to have some path to reproducing or verifying the results.
        \end{enumerate}
    \end{itemize}

\item {\bf Open access to data and code}
    \item[] Question: Does the paper provide open access to the data and code, with sufficient instructions to faithfully reproduce the main experimental results, as described in supplemental material?
    \item[] Answer: \answerNo{}
    \item[] Justification: The paper states that the implementation will be shared through a public repository such as GitHub, and the experimental datasets are public benchmarks. However, the current manuscript does not yet provide an accessible anonymized code link, complete run commands, or a script list, so we conservatively answer \answerNo{}.
    \item[] Guidelines:
    \begin{itemize}
        \item The answer \answerNA{} means that paper does not include experiments requiring code.
        \item Please see the NeurIPS code and data submission guidelines (\url{https://neurips.cc/public/guides/CodeSubmissionPolicy}) for more details.
        \item While we encourage the release of code and data, we understand that this might not be possible, so \answerNo{} is an acceptable answer. Papers cannot be rejected simply for not including code, unless this is central to the contribution (e.g., for a new open-source benchmark).
        \item The instructions should contain the exact command and environment needed to run to reproduce the results. See the NeurIPS code and data submission guidelines (\url{https://neurips.cc/public/guides/CodeSubmissionPolicy}) for more details.
        \item The authors should provide instructions on data access and preparation, including how to access the raw data, preprocessed data, intermediate data, and generated data, etc.
        \item The authors should provide scripts to reproduce all experimental results for the new proposed method and baselines. If only a subset of experiments are reproducible, they should state which ones are omitted from the script and why.
        \item At submission time, to preserve anonymity, the authors should release anonymized versions (if applicable).
        \item Providing as much information as possible in supplemental material (appended to the paper) is recommended, but including URLs to data and code is permitted.
    \end{itemize}

\item {\bf Experimental setting/details}
    \item[] Question: Does the paper specify all the training and test details (e.g., data splits, hyperparameters, how they were chosen, type of optimizer) necessary to understand the results?
    \item[] Answer: \answerYes{}
    \item[] Justification: The experiments section lists the tasks, models, baselines, and compute-matching protocol, while the appendix provides model context lengths, vLLM version, hardware, learning rate, schedule, warmup, entropy coefficient, rollout count, mini-batch size, and inference decoding settings.
    \item[] Guidelines:
    \begin{itemize}
        \item The answer \answerNA{} means that the paper does not include experiments.
        \item The experimental setting should be presented in the core of the paper to a level of detail that is necessary to appreciate the results and make sense of them.
        \item The full details can be provided either with the code, in appendix, or as supplemental material.
    \end{itemize}

\item {\bf Experiment statistical significance}
    \item[] Question: Does the paper report error bars suitably and correctly defined or other appropriate information about the statistical significance of the experiments?
    \item[] Answer: \answerYes{}
    \item[] Justification: The main results table reports means over five random seeds together with 95\% bootstrap confidence intervals, and states that improvements over baselines are statistically significant under a paired bootstrap test ($p<0.01$).
    \item[] Guidelines:
    \begin{itemize}
        \item The answer \answerNA{} means that the paper does not include experiments.
        \item The authors should answer \answerYes{} if the results are accompanied by error bars, confidence intervals, or statistical significance tests, at least for the experiments that support the main claims of the paper.
        \item The factors of variability that the error bars are capturing should be clearly stated (for example, train/test split, initialization, random drawing of some parameter, or overall run with given experimental conditions).
        \item The method for calculating the error bars should be explained (closed form formula, call to a library function, bootstrap, etc.)
        \item The assumptions made should be given (e.g., Normally distributed errors).
        \item It should be clear whether the error bar is the standard deviation or the standard error of the mean.
        \item It is OK to report 1-sigma error bars, but one should state it. The authors should preferably report a 2-sigma error bar than state that they have a 96\% CI, if the hypothesis of Normality of errors is not verified.
        \item For asymmetric distributions, the authors should be careful not to show in tables or figures symmetric error bars that would yield results that are out of range (e.g., negative error rates).
        \item If error bars are reported in tables or plots, the authors should explain in the text how they were calculated and reference the corresponding figures or tables in the text.
    \end{itemize}

\item {\bf Experiments compute resources}
    \item[] Question: For each experiment, does the paper provide sufficient information on the computer resources (type of compute workers, memory, time of execution) needed to reproduce the experiments?
    \item[] Answer: \answerYes{}
    \item[] Justification: The appendix reports that the experiments used 32 NVIDIA A800 (80GB) GPUs and that all methods used the same hardware and compute-matching protocol; it also provides detailed hyperparameters.
    \item[] Guidelines:
    \begin{itemize}
        \item The answer \answerNA{} means that the paper does not include experiments.
        \item The paper should indicate the type of compute workers CPU or GPU, internal cluster, or cloud provider, including relevant memory and storage.
        \item The paper should provide the amount of compute required for each of the individual experimental runs as well as estimate the total compute. 
        \item The paper should disclose whether the full research project required more compute than the experiments reported in the paper (e.g., preliminary or failed experiments that didn't make it into the paper). 
    \end{itemize}
    
\item {\bf Code of ethics}
    \item[] Question: Does the research conducted in the paper conform, in every respect, with the NeurIPS Code of Ethics \url{https://neurips.cc/public/EthicsGuidelines}?
    \item[] Answer: \answerYes{}
    \item[] Justification: All experiments use publicly available benchmark datasets, involve no personal or sensitive information, and present no obvious ethical risks within the current experimental setting and research scope.
    \item[] Guidelines:
    \begin{itemize}
        \item The answer \answerNA{} means that the authors have not reviewed the NeurIPS Code of Ethics.
        \item If the authors answer \answerNo, they should explain the special circumstances that require a deviation from the Code of Ethics.
        \item The authors should make sure to preserve anonymity (e.g., if there is a special consideration due to laws or regulations in their jurisdiction).
    \end{itemize}

\item {\bf Broader impacts}
    \item[] Question: Does the paper discuss both potential positive societal impacts and negative societal impacts of the work performed?
    \item[] Answer: \answerYes{}
    \item[] Justification: The paper includes a dedicated broader impact section, explaining that GRPO and related group-relative reinforcement learning variants have become an important technical route for post-training reasoning large language models and are adopted by many large-model development teams; therefore, reducing learning tax and improving sample efficiency and training stability can reduce expensive online sampling, GPU time, and iteration cost, giving the method clear economic and engineering value. No negative impact has been identified.
    \item[] Guidelines:
    \begin{itemize}
        \item The answer \answerNA{} means that there is no societal impact of the work performed.
        \item If the authors answer \answerNA{} or \answerNo, they should explain why their work has no societal impact or why the paper does not address societal impact.
        \item Examples of negative societal impacts include potential malicious or unintended uses (e.g., disinformation, generating fake profiles, surveillance), fairness considerations (e.g., deployment of technologies that could make decisions that unfairly impact specific groups), privacy considerations, and security considerations.
        \item The conference expects that many papers will be foundational research and not tied to particular applications, let alone deployments. However, if there is a direct path to any negative applications, the authors should point it out. For example, it is legitimate to point out that an improvement in the quality of generative models could be used to generate Deepfakes for disinformation. On the other hand, it is not needed to point out that a generic algorithm for optimizing neural networks could enable people to train models that generate Deepfakes faster.
        \item The authors should consider possible harms that could arise when the technology is being used as intended and functioning correctly, harms that could arise when the technology is being used as intended but gives incorrect results, and harms following from (intentional or unintentional) misuse of the technology.
        \item If there are negative societal impacts, the authors could also discuss possible mitigation strategies (e.g., gated release of models, providing defenses in addition to attacks, mechanisms for monitoring misuse, mechanisms to monitor how a system learns from feedback over time, improving the efficiency and accessibility of ML).
    \end{itemize}
    
\item {\bf Safeguards}
    \item[] Question: Does the paper describe safeguards that have been put in place for responsible release of data or models that have a high risk for misuse (e.g., pre-trained language models, image generators, or scraped datasets)?
    \item[] Answer: \answerNA{}
    \item[] Justification: This paper does not release a new pretrained language model, image generator, or scraped dataset. The experiments are algorithmic evaluations based on public benchmarks and existing base models, so this item is not applicable.
    \item[] Guidelines:
    \begin{itemize}
        \item The answer \answerNA{} means that the paper poses no such risks.
        \item Released models that have a high risk for misuse or dual-use should be released with necessary safeguards to allow for controlled use of the model, for example by requiring that users adhere to usage guidelines or restrictions to access the model or implementing safety filters. 
        \item Datasets that have been scraped from the Internet could pose safety risks. The authors should describe how they avoided releasing unsafe images.
        \item We recognize that providing effective safeguards is challenging, and many papers do not require this, but we encourage authors to take this into account and make a best faith effort.
    \end{itemize}

\item {\bf Licenses for existing assets}
    \item[] Question: Are the creators or original owners of assets (e.g., code, data, models), used in the paper, properly credited and are the license and terms of use explicitly mentioned and properly respected?
    \item[] Answer: \answerYes{}
    \item[] Justification: The paper cites the public datasets, models, and inference framework used, and lists their licenses: Qwen3-32B and Qwen3-Next-80B-A3B-Thinking are Apache-2.0 (\url{https://huggingface.co/Qwen/Qwen3-32B}, \url{https://huggingface.co/Qwen/Qwen3-Next-80B-A3B-Thinking}); GPT-OSS-120B is Apache-2.0 and follows the OpenAI gpt-oss usage policy (\url{https://huggingface.co/openai/gpt-oss-120b}, \url{https://openai.com/index/gpt-oss-model-card/}); the MathArena AIME25 and HMMT25 datasets are CC-BY-NC-SA-4.0 (\url{https://huggingface.co/datasets/MathArena/aime_2025}, \url{https://huggingface.co/datasets/MathArena/hmmt_feb_2025}); the official LiveCodeBench repository is MIT, and the Hugging Face code\_generation\_lite data card is marked as Creative Commons/CC while preserving source-platform information such as LeetCode, AtCoder, and Codeforces (\url{https://github.com/LiveCodeBench/LiveCodeBench}, \url{https://huggingface.co/datasets/livecodebench/code_generation_lite}); vLLM 0.11.2 is Apache-2.0 (\url{https://github.com/vllm-project/vllm}). We only use these public assets for non-commercial academic evaluation and inference, do not redistribute the above models, data, or code assets, and follow the corresponding attribution, non-commercial, and usage terms.
    \item[] Guidelines:
    \begin{itemize}
        \item The answer \answerNA{} means that the paper does not use existing assets.
        \item The authors should cite the original paper that produced the code package or dataset.
        \item The authors should state which version of the asset is used and, if possible, include a URL.
        \item The name of the license (e.g., CC-BY 4.0) should be included for each asset.
        \item For scraped data from a particular source (e.g., website), the copyright and terms of service of that source should be provided.
        \item If assets are released, the license, copyright information, and terms of use in the package should be provided. For popular datasets, \url{paperswithcode.com/datasets} has curated licenses for some datasets. Their licensing guide can help determine the license of a dataset.
        \item For existing datasets that are re-packaged, both the original license and the license of the derived asset (if it has changed) should be provided.
        \item If this information is not available online, the authors are encouraged to reach out to the asset's creators.
    \end{itemize}

\item {\bf New assets}
    \item[] Question: Are new assets introduced in the paper well documented and is the documentation provided alongside the assets?
    \item[] Answer: \answerNA{}
    \item[] Justification: This paper does not introduce or release a new dataset, model, or benchmark asset; its main contributions are theoretical analysis and algorithmic transformations. If implementation code is released later, the anonymized public repository will include reproduction instructions.
    \item[] Guidelines:
    \begin{itemize}
        \item The answer \answerNA{} means that the paper does not release new assets.
        \item Researchers should communicate the details of the dataset\slash code\slash model as part of their submissions via structured templates. This includes details about training, license, limitations, etc. 
        \item The paper should discuss whether and how consent was obtained from people whose asset is used.
        \item At submission time, remember to anonymize your assets (if applicable). You can either create an anonymized URL or include an anonymized zip file.
    \end{itemize}

\item {\bf Crowdsourcing and research with human subjects}
    \item[] Question: For crowdsourcing experiments and research with human subjects, does the paper include the full text of instructions given to participants and screenshots, if applicable, as well as details about compensation (if any)? 
    \item[] Answer: \answerNA{}
    \item[] Justification: This paper does not include crowdsourcing experiments or human-subject research. All experiments are based on publicly available reasoning benchmarks.
    \item[] Guidelines:
    \begin{itemize}
        \item The answer \answerNA{} means that the paper does not involve crowdsourcing nor research with human subjects.
        \item Including this information in the supplemental material is fine, but if the main contribution of the paper involves human subjects, then as much detail as possible should be included in the main paper. 
        \item According to the NeurIPS Code of Ethics, workers involved in data collection, curation, or other labor should be paid at least the minimum wage in the country of the data collector. 
    \end{itemize}

\item {\bf Institutional review board (IRB) approvals or equivalent for research with human subjects}
    \item[] Question: Does the paper describe potential risks incurred by study participants, whether such risks were disclosed to the subjects, and whether Institutional Review Board (IRB) approvals (or an equivalent approval/review based on the requirements of your country or institution) were obtained?
    \item[] Answer: \answerNA{}
    \item[] Justification: This paper does not involve crowdsourcing or human-subject research, so IRB or equivalent review approval is not required.
    \item[] Guidelines:
    \begin{itemize}
        \item The answer \answerNA{} means that the paper does not involve crowdsourcing nor research with human subjects.
        \item Depending on the country in which research is conducted, IRB approval (or equivalent) may be required for any human subjects research. If you obtained IRB approval, you should clearly state this in the paper. 
        \item We recognize that the procedures for this may vary significantly between institutions and locations, and we expect authors to adhere to the NeurIPS Code of Ethics and the guidelines for their institution. 
        \item For initial submissions, do not include any information that would break anonymity (if applicable), such as the institution conducting the review.
    \end{itemize}

\item {\bf Declaration of LLM usage}
    \item[] Question: Does the paper describe the usage of LLMs if it is an important, original, or non-standard component of the core methods in this research? Note that if the LLM is used only for writing, editing, or formatting purposes and does \emph{not} impact the core methodology, scientific rigor, or originality of the research, declaration is not required.
    \item[] Answer: \answerYes{}
    \item[] Justification: The research subject and experimental objects of the paper are reinforcement-learning fine-tuning of large language models. The experiments section and appendix explicitly state the use of Qwen3-32B, Qwen3-Next-80B-A3B-Thinking, GPT-OSS-120B, as well as context lengths, vLLM version, and decoding settings.
    \item[] Guidelines:
    \begin{itemize}
        \item The answer \answerNA{} means that the core method development in this research does not involve LLMs as any important, original, or non-standard components.
        \item Please refer to our LLM policy in the NeurIPS handbook for what should or should not be described.
    \end{itemize}

\end{enumerate}